\newtheorem{theorem}{Theorem}
\newtheorem{lemma}{Lemma}
\newtheorem{definition}{Definition}
\title{Rot-Pro: Modeling Transitivity by Projection in Knowledge Graph Embedding}
\author{
  Tengwei Song,\quad Jie Luo\thanks{Corresponding author.},\quad Lei Huang\\
  State Key Laboratory of Software Development Environment \\
  Beihang University, Beijing, 100191 \\
  \texttt{\{songtengwei,luojie,huangleiai\}@buaa.edu.cn}
}
\begin{document}

\maketitle

\begin{abstract}
Knowledge graph embedding models learn the representations of entities and relations in the knowledge graphs for predicting missing links (relations) between entities. Their effectiveness are deeply affected by the ability of modeling and inferring different relation patterns such as symmetry, asymmetry, inversion, composition and transitivity.
Although existing models are already able to model many of these relations patterns, transitivity, a very common relation pattern, is still not been fully supported.
In this paper, we first theoretically show that the transitive relations can be modeled with projections. We then propose the Rot-Pro model which combines the projection and relational rotation together. We prove that Rot-Pro can infer all the above relation patterns.
Experimental results show that the proposed Rot-Pro model  effectively learns the transitivity pattern and achieves the state-of-the-art results on the link prediction task in the datasets containing transitive relations.
\end{abstract}

\section{Introduction}

Knowledge graph embedding (KGE) aims to learn low-dimensional dense vectors to express the entities and relations in the knowledge graphs (KG). It is widely used in recommendation system, question answering, dialogue systems \cite{qa1,qa2,qa3}.
The general intuition of KGE is to model and infer relations between entities in knowledge graphs, which has complex patterns such as symmetry, asymmetry, inversion, composition, and transitivity as shown in Table \ref{tab:property}.

Many studies dedicate to find a method, which is able to model various relation patterns \cite{transe,distmult,CompLex,rotate}. TransE models relations as translations, aims to model the inversion and composition patterns; DisMult can model symmetric relations by capturing interactions between head and tail entities and relations. One representative model proposed recently is RotatE \citep{rotate}, which is proved to be able to model symmetry, asymmetry, inversion and composition patterns by modeling relation as a rotation in the complex plane. However, none of them can model all the five relation patterns, especially the transitivity pattern.

\begin{table}[t]
\caption{Common relation patterns.}
  \label{tab:property}
  \centering
  \resizebox{0.7\textwidth}{!}{
  \begin{tabular}{ll}
    \toprule
     Relation pattern & Definition \\
     \midrule
    Symmetry &
    if $(h, r, t) $, then $(t, r, h)$ \\
Asymmetry &
if $(h, r, t) $, then$ \neg (t, r, h)$ \\
Inversion &
if $r = p^{-1}$ and $ (h, r, t)$, then $ (t, p, h) $\\
Composition &
if $(r = r_1 \circ \cdots \circ r_n) \wedge (h, r_1, u_1) \wedge (u_1, r_2, u_2)$ \\
& $\ldots \wedge  (u_{n-1}, r_n, t) $, then $ (h, r, t)$\\
Transitivity &
if $(a, r, b) $ and $ (b, r, c) $ , then$ (a, r, c) $ \\
     \bottomrule
  \end{tabular}}
\end{table}

This paper focus on modeling the transitivity pattern. We theoretically show that the transitive relations can be modeled with idempotent transformations, i.e. projections \cite{valenza2012linear}. Any projection matrix is similar to a diagonal matrix with elements in the diagonal being $0$ or $1$. We design the projection by constraining the similarity matrix to be a rotation matrix, which has less parameters to be learn.

In order to model not only transitivity but also other relation patterns shown in Table \ref{tab:property}, we propose the Rot-Pro model which combines the projection and relational rotation together. We theoretically prove that Rot-Pro can infer the symmetry, asymmetry, inversion, composition, and transitivity patterns. Experimental results show that the Rot-Pro model can effectively learn the transitivity pattern. The Rot-Pro model achieves the state-of-the-art results on the link prediction task in the Countries dataset containing transitive relations and outperforms other models in the YAGO3-10 and FB15k-237 dataset.

\section{Related work}
\label{related_work}

\begin{table}[t]
  \caption{The supported relation patterns of several models \cite{rotate}.}
  \label{Relation Prediction Results1-1}
  \centering
  \resizebox{0.7\textwidth}{!}{
  \begin{tabular}{lccccc}
    \toprule
     & Symmetry & Asymmetry & Inversion & Composition & Transitivity  \\
    \midrule
TransE &  \XSolidBrush & \Checkmark & \Checkmark &\Checkmark & \XSolidBrush\\
DistMult &  \Checkmark & \XSolidBrush &\XSolidBrush &\XSolidBrush &\XSolidBrush \\
ComplEx &  \Checkmark & \Checkmark & \Checkmark & \XSolidBrush &\XSolidBrush \\
RotatE & \Checkmark &\Checkmark &\Checkmark &\Checkmark & \XSolidBrush\\
    \midrule
Rot-Pro & \Checkmark &\Checkmark &\Checkmark &\Checkmark &\Checkmark \\
    \bottomrule
  \end{tabular}}
\end{table}

There are mainly two types of knowledge graph embedding models, either using translation transformation or linear transformation between head and tail entities.

\paragraph{Trans-series models.} Trans-series models, which is well-known in KGE area, are essentially translation transformation based models. TransE \citep{transe} proposed a pure translation distance-based score function, which assumes the added embedding of head entity $h$ and relation $r$  should be close to the embedding of tail entity $t$. This simple approach is effective in capturing composition, asymmetric and inversion relations, but is hard to handle the 1-to-N, N-to-1 and N-N relations.

To overcome these issues, many variants and extensions of TransE have been proposed. TransH \citep{transh} projects entities and relations into a relation-specific hyperplanes and enables different projections of an entity in different relations. TransR \citep{transr} introduces relation-specific spaces, which builds entity and relation embeddings in different spaces separately.
TransD \citep{transd} simplifies TransR by constructs dynamic mapping matrices.
For the purpose of model optimization, some models relax the requirement for translational distance.
TransA \citep{transa} replaces Euclidean distance by Mahalanobis distance to enable more adaptive metric learning, and a recent model TransMS \citep{transMS} transmits multi-directional semantics by complex relations.

The variants of TransE improve the capability of the models to handle 1-to-N, N-to-1 and N-N relations as well as effectively modeling symmetric and asymmetric relations, but they are no longer able to model composition and inversion relations as they do linear transformation on head and tail entities separately before modeling the relation as translation.
BoxE \citep{abboud2020boxe}, a recent trans-series model, embeds entities as points, and relations as a set of boxes, for yielding a model that could express multiple relation patterns including composition and inversion, but it cannot express transitivity.

\paragraph{Bilinear models.} Models of bilinear series model relations as linear transformation matrix $M_r$ from head to tail entity. The type of relation patterns that a linear transformation based model can infer depends on the property of $M_r$.
RESCAL \citep{Rescal} proposes the transformation of relation as a matrix that models the pairwise interactions between entities and relation. The score of a fact is defined by a bilinear function: $f_r = h^{T}M_rt$.
DistMult \citep{distmult} simplifies RESCAL by restricting $M_r$ to diagonal matrices. Therefore, it cannot handle other types of relations except symmetry.
HolE \citep{hole} combines the expressive power of RESCAL with the simplicity of DistMult which introduces circular correlations. HolE can express multiple types of relations, since cyclic correlation operations are not commutative.

Recently, some KGE models begin to model relation patterns explicitly. Dihedral \citep{dihedral} models relations in KGs with the representation of dihedral group that has properties to support the relation as symmetry.
To expand Euclidean space, ComplEx \citep{CompLex} firstly introduces complex vector space which can capture both symmetric and asymmetric relations.
RotatE \citep{rotate} also models in complex space and can capture additional inversion and composition patterns by introducing rotational Hadmard product.
QuatE \citep{zhang2019quaternion} extends RotatE, using a quaternion inner product and gains more expressive semantic learning capability. ATTH \citep{atth} proposes a low-dimensional hyperbolic knowledge graph embedding method, which capture logical patterns such as symmetry and asymmetrical.

However, none of existing models is capable of modeling transitivity relation pattern. We are the first to show that the transitivity can be modeled with projections, and we prove that the proposed model is able to infer all the five relation patterns shown in Table \ref{tab:property}.

\section{Rot-Pro: Modeling Relation Patterns by Projection and Rotation}
\label{sec:model}
\subsection{Preliminary}

RotatE is a representative approach that models a relation as an element-wise rotation from the embedding $\mathbf{e}_h$ of head entity to the embedding $\mathbf{e}_t$ of tail entity in complex vector space. This can be denoted as $\mathbf{e}_{t}(k) \approx rot(\mathbf{e}_{h}(k), \theta_r(k))$, where $\theta_r$ is the embedding of relation $r$ and $rot(\mathbf{e}_{h}(k), \theta_r(k))$ is the rotation function that rotates the $k$th element of $\mathbf{e}_{h}$ with a phase of the $k$th element of $\theta_r$. For each embedding $\mathbf{e}$, let $Re(\mathbf{e}(k))$ and $Im(\mathbf{e}(k))$ be the real number and imaginary number of its $k$th dimension, the rotation transformation in the $k$th dimension is defined by an orthogonal matrix whose determinant is $1$, as follows:
\begin{equation}
\begin{bmatrix}
Re(\mathbf{e}_{t}(k)) \\
Im(\mathbf{e}_{t}(k))
\end{bmatrix}
\approx
\begin{bmatrix}
Re(rot(\mathbf{e}_h(k), \theta_r(k)))\\
Im(rot(\mathbf{e}_h(k), \theta_r(k)))
\end{bmatrix}
=
\begin{bmatrix}
\cos \theta_{\mathbf{r}}(k) & -\sin \theta_{\mathbf{r}}(k) \\
\sin \theta_{\mathbf{r}}(k) & \cos \theta_{\mathbf{r}}(k)
\end{bmatrix}
\begin{bmatrix}
Re(\mathbf{e}_{h}(k)) \\
Im(\mathbf{e}_{h}(k))
\end{bmatrix}.
\end{equation}

It has been proved that RotatE can infer symmetry, asymmetry, inversion, and composition relation patterns \cite{rotate}. However, it cannot infer the transitivity pattern, and we will explain in what follows.

\begin{figure}
\centering
\subfigure[Transitive Chain]{
    \begin{minipage}{0.3\textwidth}
    \label{fig:chain}
    \centering
    \includegraphics[height=4cm]{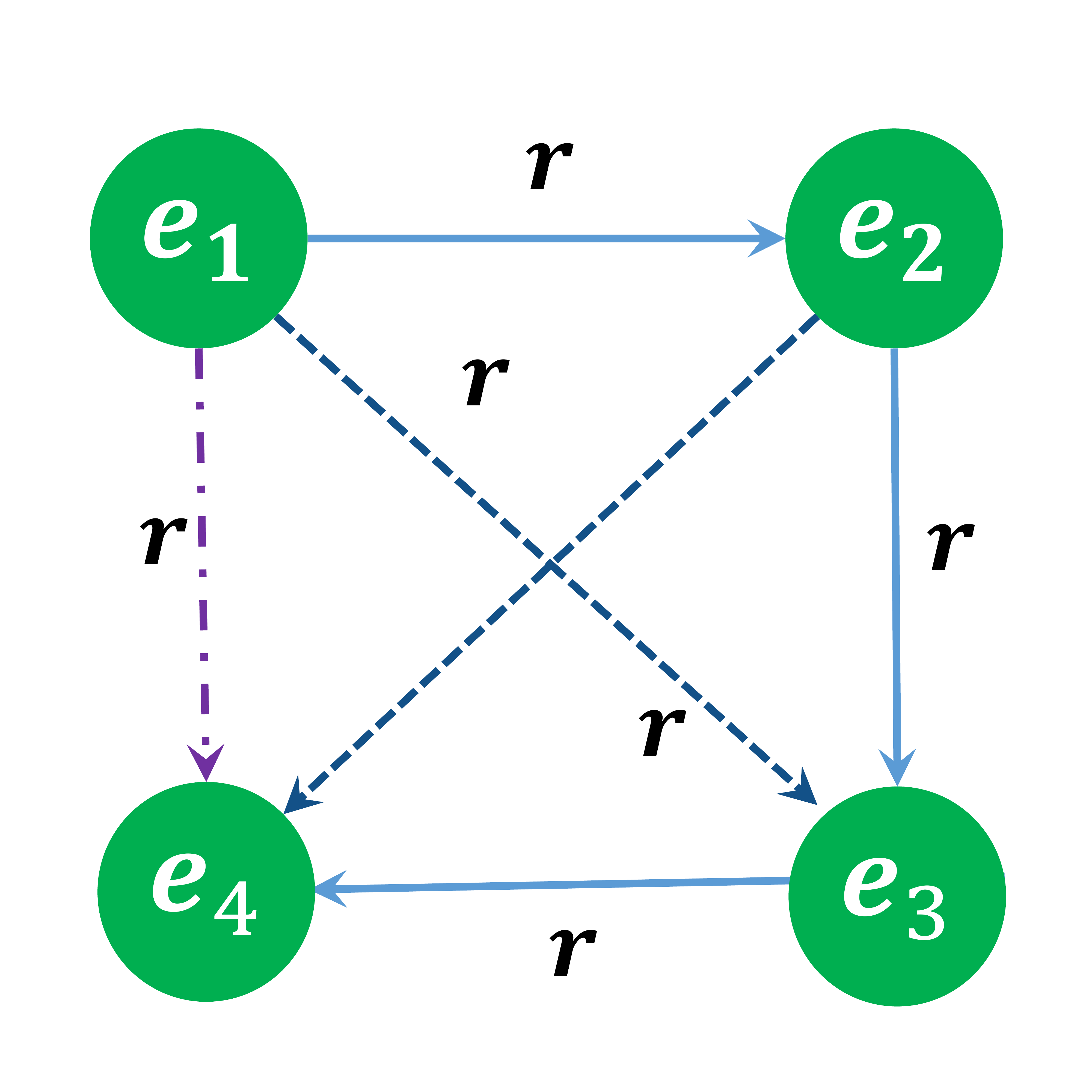}
    \end{minipage}
    }
\subfigure[Limitation of TransE]{
    \begin{minipage}{0.35\textwidth}
    \label{fig:lim-transe}
    \centering
    \includegraphics[height=4cm]{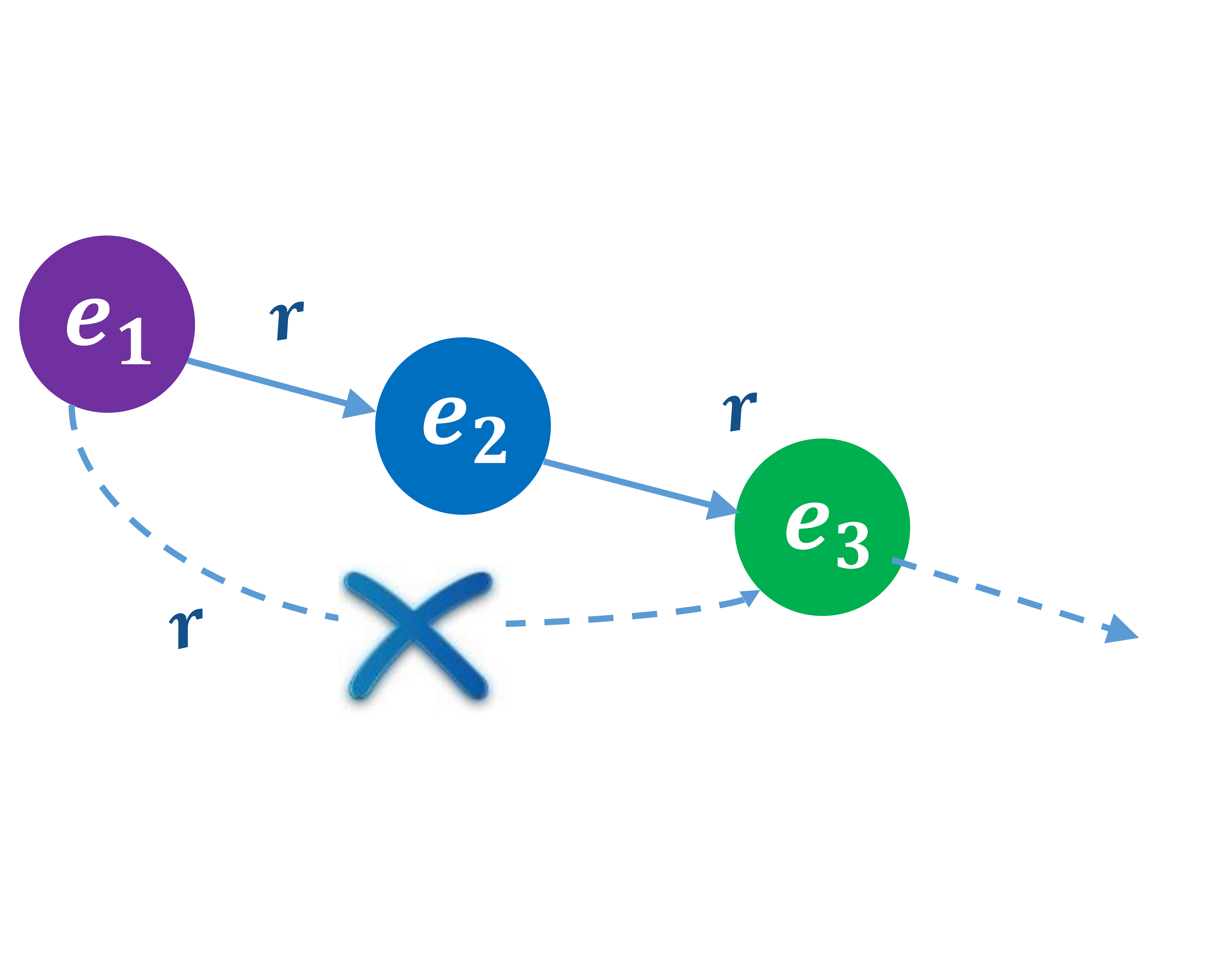}
    \end{minipage}
    }
\subfigure[Limitation of RotatE]{
    \begin{minipage}{0.3\textwidth}
    \label{fig:lim-rotate}
    \centering
    \includegraphics[height=4cm]{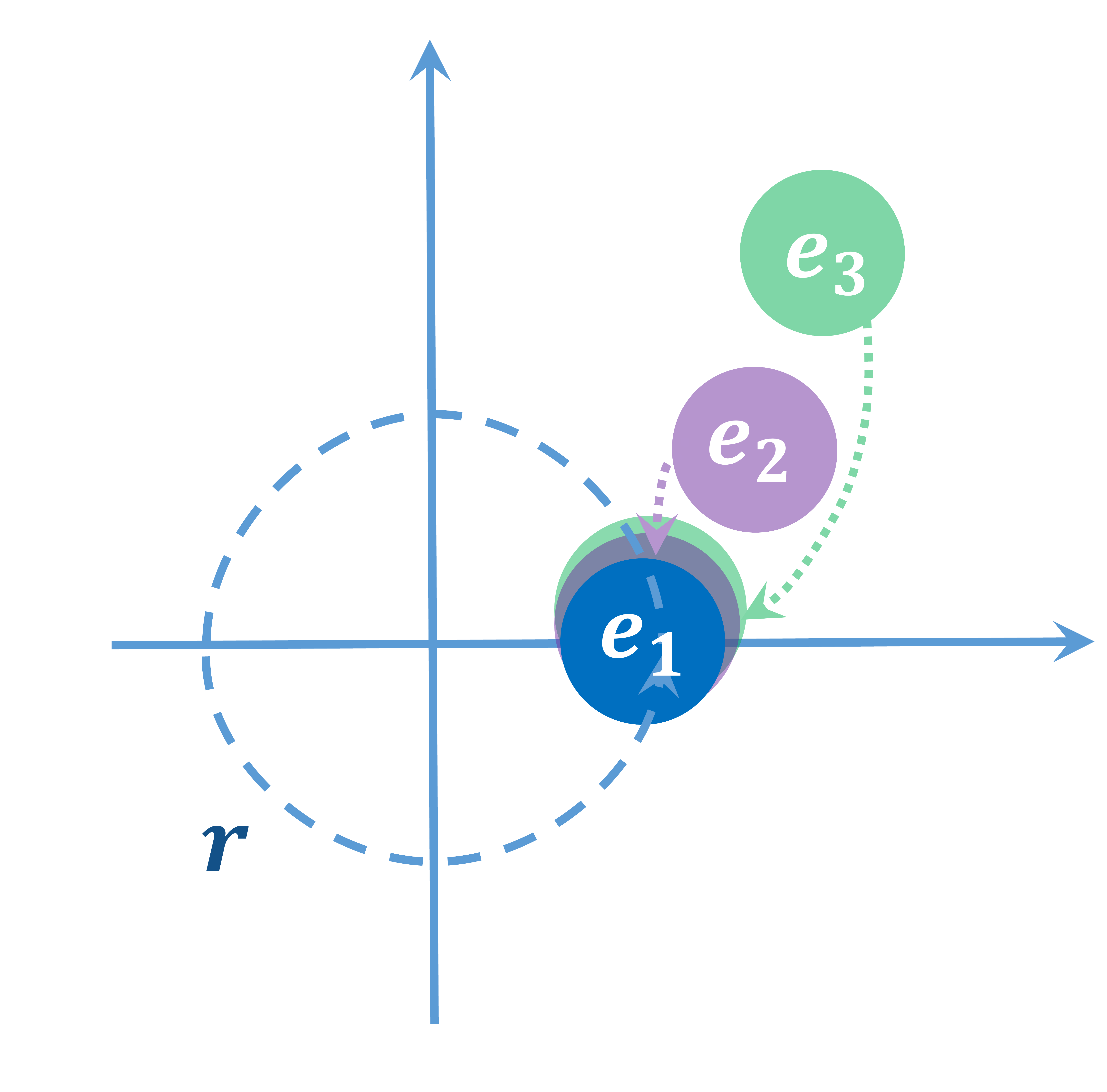}
    \end{minipage}
    }
    \caption{Illustration of transitive chain and the limitation of TransE and RotatE on representing transitivity pattern.}
\end{figure}

\subsection{Representation of transitive relation}
Relation $r$ is a transitive relation, if for any instances $(e_1, r, e_2)$ and $(e_2, r, e_3)$ of relation $r$, $(e_1, r, e_3)$ is also an instance of $r$. For convenience of illustration, we define the transitive chain of a transitive relation as follows.
\begin{definition}
A \emph{transitive chain} of $r$ is defined as a chain of instances $(e_1, r, e_2), \ldots, (e_{m-1}, r, e_m)$ of $r$, where $e_1, \ldots, e_m$ are different entities.
\end{definition}
For the transitive closure of a transitive chain, every two entities in the chain should be connected by the transitive relation $r$. Hence, it can be represented as a fully connected directed graph with $\frac{m(m-1)}{2}$ edges. It can be proved that a transitive relation can be represented as the union of transitive closures of all transitive chains. Thus, the representation of transitive relations can be reduced to the representation of transitive chains.

An example of a transitive chain and its transitive closure are shown in Figure~\ref{fig:chain}, where $(e_1,r,e_2), (e_2,r,e_3), (e_3,r,e_4)$ form a transitive chain, $(e_1, r, e_3)$ and $(e_2, r, e_4)$ are instances derived by transitivity via one-hop, and $(e_1, r, e_4)$ is the only instance derived via two-hops.

Due to the speciality of transitivity, current models are unable to effectively model such transformation in vector space. For instance, in TransE (Figure~\ref{fig:lim-transe}), where a relation is regarded as a translation between the head and tail entities, it requires the translation to be a zero vector to model transitivity, which forces the embeddings of entities in a transitive chain to be the same. Thus, it cannot model transitivity. In RotatE (Figure~\ref{fig:lim-rotate}), it requires the relational rotation phase $\theta_r(k)$ in each dimension to be $2n\pi$ ($n=0,1,\ldots$) to model transitivity, which also forces the embeddings of entities to be the same in a transitive chain.

\paragraph{Our solution.} Based on the observation on transitive closures of transitive chains, in each transitive chain $(e_1, r, e_2), \ldots, (e_{m-1}, r, e_m)$, for each entity $e_j$, $(e_j, r, e_{j+l})$ can be derived by transitivity via $l-1$-hops ($2 \leqslant l \leqslant m-j$). If we model each relation $r$ as a kind of transformation $T_r$, then it requires $T_r(\mathbf{e}_h) = \mathbf{e}_t$ for each relation instance $(h,r,t)$.
Therefore, the transformation of a transitive relation must satisfies that $T_r^l(\mathbf{e}_j) = T_r(\mathbf{e}_j)$ ($1 \leqslant j \leqslant m, 1 \leqslant l \leqslant m-j$), i.e. the result of transforming an entity embedding multiple times is equivalent to that of transforming it once. This inspires us to model the transitivity pattern in terms of the idempotent transformation (projection \cite{valenza2012linear}) which has the same property.
For each relation $r$, let $S_r(k)$ be an invertible matrix on the $k$th dimension, a general orthogonal projection is defined by the idempotent matrix:
\begin{equation}
\label{eq:m}
M_r(k) = S_r(k)^{-1}
\left[
\begin{matrix}
a_r(k) & 0 \\
0 & b_r(k)
\end{matrix}
\right] S_r(k), \quad (a_r(k), b_r(k) \in \{0, 1\}).
\end{equation}
Without loss of generality, we simply set $S_r(k) = \begin{bmatrix}
\cos \theta_{p}(k) & - \sin \theta_{p}(k)\\
\sin \theta_{p}(k) & \cos \theta_{p}(k)
\end{bmatrix}$ to be a rotation matrix, which rotates the original axis by a phase $\theta_p(k)$.
The orthogonal projection $p_r(k)$ defined by $M_r(k)$ is performed in the new axis after rotation:
\begin{equation}
p_r(k)(x + yi) = [1 \ i] M_r(k) \begin{bmatrix}
x\\
y
\end{bmatrix}.
\end{equation}
In the rest of paper, we will omit the dimensional indices $(k)$ in $M_r(k)$ and $p_r(k)$ for simplicity. In this way, for entities $e_1, \ldots, e_m$ in a transitive chain, we have $p_r^l(\mathbf{e}_j(k)) = p_r(\mathbf{e}_j(k))$ ($1 \leqslant j \leqslant m, 1 \leqslant l \leqslant m-j$). This implies that $p_r(\mathbf{e}_{1}(k)) = \cdots = p_r(\mathbf{e}_{m}(k))$, which does not force the entity embeddings $\mathbf{e}_1, \ldots, \mathbf{e}_m$ to be the same. The embeddings $\mathbf{e}_1, \ldots, \mathbf{e}_m$ can be different to each other and have the same projected vector under $p_r$.

\subsection{The Rot-Pro model}
\textbf{Model formulation.}  In order to model not only transitivity but also other relation patterns shown in Table \ref{tab:property}, we combine the above projection based representation for transitivity and the relational rotation based representation for symmetry, asymmetry, inversion, and composition together. We propose Rot-Pro to model relations as relational rotations on the projected entity embeddings on complex space $\mathbb{C}^d$.
For each triple $(h,r,t)$, the Rot-Pro model requires that
\begin{equation}\label{eq:exp}
rot(p_r(\mathbf{e}_h(k)), \theta_{r}(k)) = p_r(\mathbf{e}_t(k)).
\end{equation}
We demonstrate in the following theorem that Rot-Pro enables the modeling and inferring of all the five types of relation patterns introduced above.

\begin{theorem}
\label{theorem}
Rot-Pro can infer the symmetry, asymmetry, inversion, composition, and transitivity patterns.
\end{theorem}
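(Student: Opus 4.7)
The plan is to prove each of the five claims separately by exhibiting explicit parameter settings (rotation phase $\theta_r(k)$, projection phase $\theta_p(k)$, and diagonal entries $a_r(k), b_r(k) \in \{0,1\}$) under which the Rot-Pro constraint (4) enforces exactly the target entailment. Because equation (4) factorizes over the dimension index $k$, I can reason one coordinate at a time and the problem reduces to elementary $2\times 2$ real linear algebra in that coordinate.

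First I would dispatch the four RotatE-style patterns by taking $a_r(k) = b_r(k) = 1$ so that $M_r(k) = I$ and $p_r$ reduces to the identity; in this regime Rot-Pro coincides with RotatE per dimension and the arguments of \cite{rotate} transfer verbatim. Concretely, symmetry follows from $\theta_r(k) \in \{0, \pi\}$, making $rot(\cdot, 2\theta_r(k))$ trivial; asymmetry from any $\theta_r(k) \notin \{0,\pi\}$, which prevents the reverse direction from holding on any nonzero vector; inversion from $\theta_{r^{-1}}(k) = -\theta_r(k)$ so the two rotations cancel; and composition $r = r_1 \circ \cdots \circ r_n$ from $\theta_r(k) = \sum_i \theta_{r_i}(k)$, using additivity of rotation angles on the circle.

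The central new case is transitivity, where the projection carries the argument. I would set $\theta_r(k) = 0$ and choose $p_r$ to be a nontrivial rank-one projection (for instance $a_r(k) = 1,\ b_r(k) = 0$ with an arbitrary $\theta_p(k)$). Then the constraint (4) reduces to $p_r(\mathbf{e}_h(k)) = p_r(\mathbf{e}_t(k))$ for every instance $(h,r,t)$. Given $(a,r,b)$ and $(b,r,c)$, transitivity of equality yields $p_r(\mathbf{e}_a(k)) = p_r(\mathbf{e}_c(k))$, so $(a,r,c)$ is satisfied as well. Crucially, because the kernel of $p_r$ is the one-dimensional line orthogonal to the axis at phase $\theta_p(k)$, the embeddings $\mathbf{e}_a, \mathbf{e}_b, \mathbf{e}_c$ can be pairwise distinct while sharing a common projection, which is exactly the escape from the entity-collapse phenomenon the paper identifies for TransE and RotatE in Figures~\ref{fig:lim-transe}--\ref{fig:lim-rotate}.

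I expect the main obstacle to be subtle rather than laborious. The first wrinkle is ensuring that each construction respects the discreteness constraint $a_r(k), b_r(k) \in \{0,1\}$, so $M_r(k)$ is a genuine idempotent and equation (2) is honored. The second is that with $\theta_r(k) = 0$ the Rot-Pro constraint becomes symmetric in $h$ and $t$ on the image of $p_r$; I must frame this correctly as a statement about what the model can represent rather than a claim that every transitive relation is forced to be symmetric, since the theorem only asks that Rot-Pro have enough capacity to realize each pattern in isolation, and the per-pattern witnesses above supply exactly that.
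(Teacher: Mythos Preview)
Your proposal is correct and follows essentially the same approach as the paper: reduce to RotatE by setting $a_r=b_r=1$ for the first four patterns, and witness transitivity with a rank-one projection ($a_r=1,\ b_r=0$) together with a trivial rotation $\theta_r(k)=2n\pi$. The paper additionally writes out the explicit line equation $\cos\theta_p(k)\,x-\sin\theta_p(k)\,y=c_k$ on which the entities of a transitive chain may sit, but this is just a coordinate unpacking of your observation that the kernel of $p_r$ is one-dimensional; your remark about the induced symmetry when $\theta_r(k)=0$ is a point the paper's proof does not raise but is consistent with its ``capacity to realize each pattern'' reading of the theorem.
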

\begin{proof}
(1) Let $a_r = 1$ and $b_r = 1$, $M$ becomes an identity matrix and $p$ becomes an identity transformation, and our model is reduced to the RotatE model. Thus, Rot-Pro can also infer the symmetry, asymmetry, inversion and composition patterns as RotatE does \cite{rotate}.

(2) Here we will construct the solutions in Rot-Pro model for transitive relations. Let $a_r = 1$ and $b_r = 0$, $p_r$ is a projection to the real axis $x'$ as shown in Figure~\ref{fig:proj-rot} \footnote{We can also set $a = 0$ and $b = 1$, then $p$ is a projection to the imaginary axis $y'$.}. As discussed in previous section, to model the transitivity of relation $r$, the projected entity embeddings in a transitive chain must satisfy that $rot(p(rot(p(\mathbf{e}_j(k)), \theta_r(k))), \theta_r(k)) = rot(p_r(\mathbf{e}_j(k)), \theta_r(k)) = p_r(\mathbf{e}_{j+2}(k)) = p_r(\mathbf{e}_{j+1}(k))$. Therefore, the phase of relational rotation $\theta_{r}(k)$ can only be $2n\pi$ ($n=0,1,2,\ldots$) and  $p_r(\mathbf{e}_j(k)) = p_r(\mathbf{e}_1(k))$ for any $1 < j \leqslant m$.

According to Equation~\ref{eq:exp}, the following equation is expected to hold.
\begin{equation}
M_r
\begin{bmatrix}
Re(e_{j}(k)) \\
Im(e_{j}(k))
\end{bmatrix}
=
\begin{bmatrix}
\cos \theta_{r}(k) & -\sin \theta_{r}(k) \\
\sin \theta_{r}(k) & \cos \theta_{r}(k)
\end{bmatrix}
\left(
M_r
\begin{bmatrix}
Re(e_{1}(k)) \\
Im(e_{1}(k))
\end{bmatrix}
\right)
=
M_r
\begin{bmatrix}
Re(e_{1}(k)) \\
Im(e_{1}(k))
\end{bmatrix}
\label{eq:5}
\end{equation}

The above equation holds iff
\begin{equation}
\label{eq:6}
\cos \theta_p(k) Re(e_{j}(k)) - \sin \theta_p(k) Im(e_{j}(k)) = \cos \theta_p(k) Re(e_{1}(k)) - \sin \theta_p(k) Im(e_{1}(k)).
\end{equation}
This equation holds if for any $e_j$ in the transitive chain,
\[
\cos \theta_p(i) Re(e_{j}(k)) - \sin \theta_p(k) Im(e_{j}(k)) = c_k,
\]
where $c_k = \cos \theta_p(k) Re(e_{1}(k)) - \sin \theta_p(k) Im(e_{1}(k))$ is a constant. That is, all these entity embeddings $\mathbf{e}_j(k) = x + yi$ in the transitive chain are located in the line defined by Equation~\eqref{eq:line} on the complex plane as shown in Figure~\ref{fig:proj-rot}.
\begin{equation}\label{eq:line}
\cos \theta_p(k) x - \sin \theta_p(k) y = c_k.
\end{equation}
Here, different value of $c_k$ can represent different transitive chain.
In summary, we construct the solutions for representing transitivity in Rot-Pro model, i.e. $a_r(k) = 1$, $b_r(k) = 0$, $\theta_r(k) = 2n\pi$, and for any entity embedding $\mathbf{e}_j$, it satisfies that $\cos \theta_p(i) Re(e_{j}(k)) - \sin \theta_{p}(k) Im(e_{j}(k)) = c_k$, where $c_k$ is a constant.
\end{proof}

\paragraph{Score function.} For each triple $(h, r, t)$, the distance function of the Rot-Pro model is defined as following:
\begin{equation}
\label{eq:scoring func}
d_r(\mathbf{e}_h, \mathbf{e}_t) = \|rot(p_r(\mathbf{e}_h), \theta_{r}) -p_r(\mathbf{e}_t)\|.
\end{equation}
The score function $f_r(\mathbf{e}_h, \mathbf{e}_t) = -d_r(\mathbf{e}_h, \mathbf{e}_t)$.

\begin{figure}
\centering
\includegraphics[height=4.5cm]{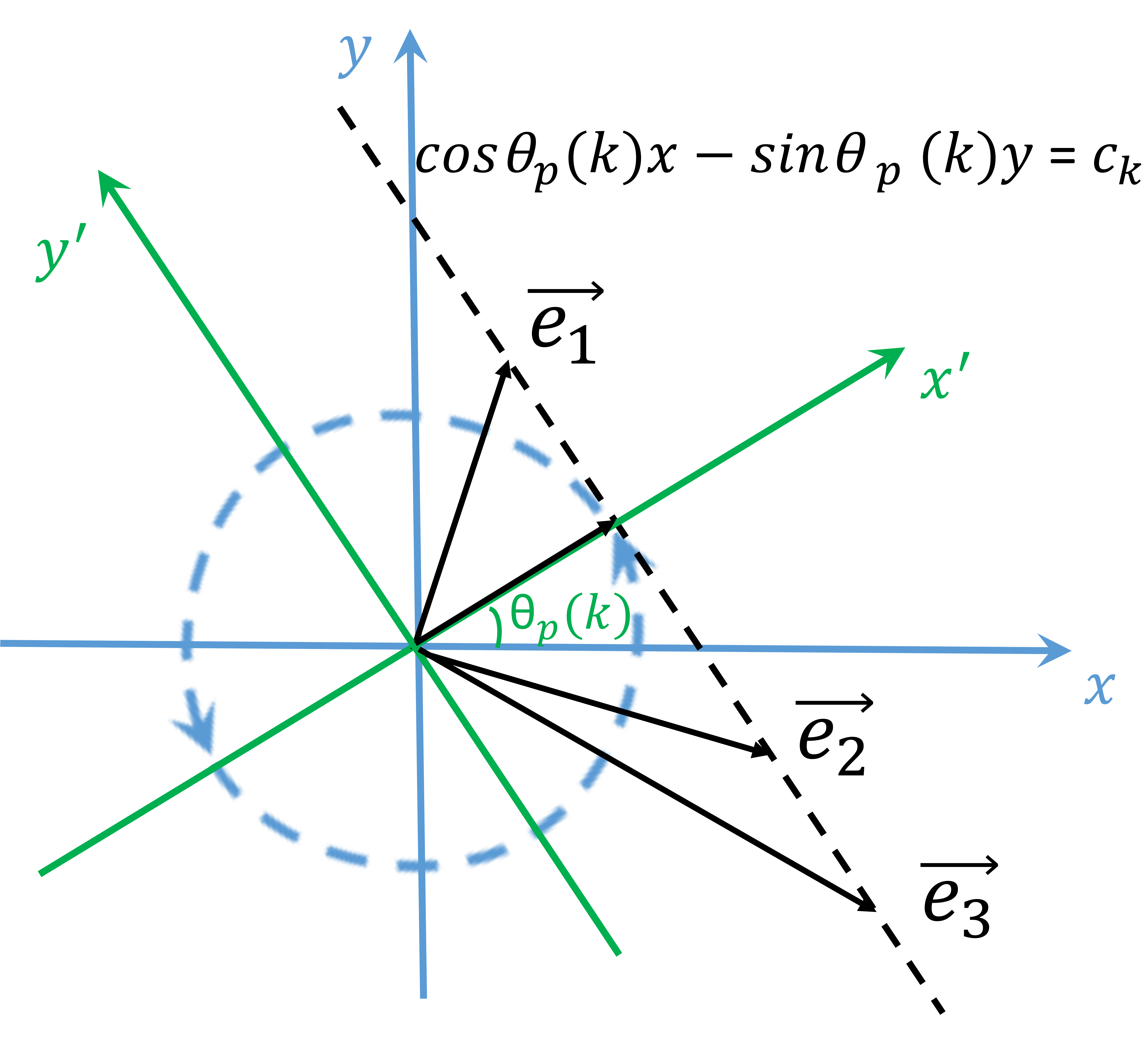}
\caption{The representation of transitivity pattern in complex plane.}
\label{fig:proj-rot}
\end{figure}

\subsection{Optimization objective}
In the training process, we adopt the self-adversarial negative sampling, which has been proved as an effective optimization approach to KGE \citep{rotate}. The negative
sampling loss $\mathcal{L}_s$ with self-adversarial training is defined as:
\begin{equation}
\label{eq:loss-in-rotate}
\mathcal{L}_s=-\log \sigma(\gamma-d_{r}(\mathbf{h}, \mathbf{t}))-\sum_{j=1}^{n} p(h_{j}^{\prime}, r, t_{j}^{\prime}) \log \sigma(d_{r}(\mathbf{h}_{j}^{\prime}, \mathbf{t}_{j}^{\prime})-\gamma)
\end{equation}
where $\gamma$ is a fixed margin, $\sigma$ is the sigmoid function, $(h_{j}^{\prime}, r, t_{j}^{\prime})$ is the $j$th negative instance and $p(h_{j}^{\prime}, r, t_{j}^{\prime})$ is the distribution for negative sampling \citep{rotate}.

In addition, to ensure the learned matrix to be a projection, the values of $a$ and $b$ in Equation~\ref{eq:m} should be restricted to $0$ or $1$. To enforce such constraint, we proposed a projection penalty loss as follows:
\begin{equation}\label{eq:loss}
L_{p} =\sum_{j=1}^{|R|} (||(\mathbf{a}_j - 1.0) \odot \mathbf{a}_j \odot \mathbf{q}_j ||_2 + || (\mathbf{b}_j - 1.0) \odot \mathbf{b}_j \odot \mathbf{q}_j ||_2).
\end{equation}
Here $|R|$ is the number of relations, $\odot$ is the Hadamard product, and $\mathbf{q}_{j}=\{q_{j}(k)\}_{k=1}^{d}$, where $q_{j}(k) = 1$ if $[(\mathbf{x}_{j}(k)-1.0) \odot (\mathbf{x}_{j}(k)-0.0)] < \gamma_{m}$, otherwise $q_{j}(k) = \beta > 1$. Here $\gamma_{m}$ and $\beta$ are hyper-parameters. We define $\mathbf{q}_j$ to impose more penalty to values which are far from $0$ or $1$ than that of values which are close to $0$ or $1$.

Let $\alpha$ be a hyper-parameter, the total loss is defined as the weighted average of the above two losses.
\begin{equation}
\mathcal{L} = \mathcal{L}_s + \alpha \cdot \mathcal{L}_p.
\end{equation}

\section{Experiments}
\label{sec:experiments}

\subsection{Datasets}
We evaluate the Rot-Pro model on four well-known  benchmarks. In general, FB15k-237 and WN18RR are two widely-used benchmarks and YAGO3-10 and Countries are two benchmarks with abundant relation patterns including transitivity.
\begin{itemize}
    \item \textbf{FB15k-237:} Freebase \citep{Freebase} contains information including people, media, geographical and locations. FB15k is a subset of Freebase and FB15k-237 \citep{fb15k-237} is a modified version of FB15k, which excludes inverse relations to resolve a flaw with FB15k \citep{conve}. It contains 14,541 entities, 237 relations, and 272,115 training triples.
    \item \textbf{WN18RR:} WN18RR \citep{conve} is a subset of WN18 \citep{transe} from WordNet \citep{Wordnet}. WordNet is a dataset that characterizes associations between English words. Compared with WN18, WN18RR retains most of the symmetric, asymmetric and compositional relations, while removing the inversion relations. It contains 40,943 entities, 11 relations, and 86,835 training triples.
    \item \textbf{YAGO3-10:} YAGO \citep{yago} is a dataset which integrates vocabulary definitions of WordNet with classification system of Wikipedia. YAGO3-10 \cite{yago3} is a subset of YAGO, which contains 123,182 entities, 37 relations and 1,079,040 training triples. According to the ontology of YAGO3, it contains almost all common relation patterns.
    \item \textbf{Countries:} Countries \citep{countries} is a relatively small-scale dataset, which contains 2 relations and 272 entities (244 countries, 5 regions and 23 sub-regions). The two relations of Countries are \emph{locatedIn} and \emph{neighborOf}, which are transitive and symmetric relations respectively. The Countries dataset has 3 tasks, each requiring inferring a composition pattern with increasing length and difficulty.
\end{itemize}

\subsection{Evaluation protocol}
We evaluate the KGE models on three common evaluation metrics: mean rank (MR), mean reciprocal rank (MRR), and top-$k$ Hit Ratio (Hit@$k$).
For each valid triples $(h, r, t)$ in the test set, we replace either $h$ or $t$ with every other entities in the dataset to create corrupted triples in the link prediction task. Following previous work \citep{transe,conve,convkb,zhang2019quaternion,kbat}, all the models are evaluated in a \emph{filtered} setting, i.e, corrupt triples that appear in training, validation, or test sets are removed during ranking. The valid triple and filtered corrupted triples are ranked in ascending order based on their prediction scores. Lower MR, higher MRR or higher Hit@$k$ indicate better performance.

\subsection{Experiment setup}
With the hyper-parameters introduced, we train Rot-Pro using a grid search of hyper-parameters: fixed margin $\gamma$ in Equation~\ref{eq:loss-in-rotate} $\in\{0.1, 4.0, 6.0, 9.0, 16.0, 20.0\}$, weights tuning hyper-parameters for loss, $\alpha \in \{0.0001, 0.0005, 0.0008\}$, value of $\gamma_{m}$ in Equation~\ref{eq:loss} $\in \{1e^{-6}, 5e^{-6}, 1e^{-5}\}$, value of $\beta$ in Equation ~\ref{eq:loss} $ \in \{1.3, 1.5, 2.0\}$.
Both the real and imaginary parts of the entity embeddings are uniformly initialized, and the phases of the relational rotations are initialized between $\{(-\pi, \pi), (-\frac{\pi}{2}, \frac{\pi}{2})\}$. In some settings, the phases of the relational rotations are also normalized to between $\{(-\pi, \pi), (-\frac{\pi}{2}, \frac{\pi}{2})\}$ during training.

\subsection{Main results}
\label{results}
We compare Rot-Pro with several state-of-the-art models, including TransE \citep{transe}, DistMult \citep{distmult}, ComplEx \citep{CompLex}, ConvE \citep{conve}, as well as RotatE \citep{rotate} and BoxE \citep{abboud2020boxe}, to empirically show the importance
of being able to model and infer more relation patterns for the task of predicting missing links.
Table~\ref{tab:res on  fb and wn} summarizes our results on FB15k-237 and WN18RR, where results of baseline models are taken from Sun et al \citep{rotate} and Ralph et al \citep{abboud2020boxe}. We can see that Rot-Pro outperforms the baseline models on most evaluation metrics.
Compared to RotatE, the improvement of Rot-Pro is limited  since there is no sufficient  transitive relation defined on these two datasets, but the results are still comparable with other baseline models.

Table~\ref{tab:res on yago3} summarizes our results on YAGO3-10 and Countries, which contain transitive relations. Hence the improvement of Rot-Pro over RotatE and other linear transformation models is much more significant. Specifically, Rot-Pro obtains better AUC-PR result than existing state-of-the-art approaches, which indicates that Rot-Pro could effectively infer relation patterns such as transitivity, symmetry and composition.
As a translation transformation model, BoxE outperforms Rot-Pro on YAGO3-10 on most evaluation metrics, which indicates it is also a strong KGE model for inferring  multiple relation patterns. However, the performance of BoxE on specific transitivity test sets is still not comparable with Rot-Pro, where additional experiments can be found in the appendix.

\begin{table}[t]
  \caption{Link prediction results on FB15k-237 and WN18RR.}
  \label{tab:res on  fb and wn}
  \centering
  \resizebox{\textwidth}{!}{
  \begin{tabular}{lcccccccccc}
    \toprule
     &
    \multicolumn{5}{c}{FB15k-237} &
    \multicolumn{5}{c}{WN18RR}
    \\
      \cmidrule(lr{.75em}){2-6}
     \cmidrule(lr{.75em}){7-11}
&  MR & MRR & Hit@1 & Hit@3 & Hit@10
&  MR & MRR & Hit@1 & Hit@3 & Hit@10
\\
    \midrule
TransE \citep{transe} &  357 & .294 & - & - & .465 &
         3384 & .226 & - & - & .501  \\
DistMult \citep{distmult}& 254 & .241 & .155 & .263 & .419 &
        5110 & .43 & .39 & .44 & .49\\
ComplEx \citep{CompLex}& 339 & .247 & .158 & .275 & .428 &
        5261 & .44 & .41 & .46 & .51\\
ConvE \citep{conve}& 244 & .325 & .237 & .356 & .501 &
        4187 & .43 & .40 & .44 & .52\\
RotatE \citep{rotate}&
        177& .338 &.241 & .375 & .533 &
        3340 & \textbf{.476} & \textbf{.428} & \textbf{.492} & .571\\
BoxE \cite{abboud2020boxe}&
        \textbf{163} &.337& .238 & .347 & .538 &
        3207 &.451 & .400 & .472 & .541 \\
    \midrule
Rot-Pro & 201 & \textbf{.344} &\textbf{.246}  & \textbf{.383} & \textbf{.540} &
        \textbf{2815} & .457&  .397 & .482 &\textbf{.577}
\\
    \bottomrule
  \end{tabular}}
\end{table}

\begin{table}[t]
  \caption{Link prediction results on YAGO3-10 and Countries.}
  \label{tab:res on yago3}
  \centering
  \setlength{\tabcolsep}{3mm}{
   \resizebox{!}{17mm}{
  \begin{tabular}{lcccccccc}
    \toprule
     &
    \multicolumn{5}{c}{YAGO3-10} &
    \multicolumn{3}{c}{Countries (AUC-PR)}
    \\
      \cmidrule(lr{.75em}){2-6}
      \cmidrule(lr{.75em}){7-9}
&  MR & MRR & Hit@1 & Hit@3 & Hit@10
& S1 &S2 & S3
\\
    \midrule
DistMult \citep{distmult}& 5926 & .34 & .24 & .38 & .54
& 1.00 & 0.72 & 0.52\\
ComplEx \citep{CompLex}& 6351 & .36 & .26 & .40 & .55
& 0.97 & 0.57 & 0.43\\
ConvE \citep{conve}& 1671 & .44 & .35 & .49 & .62
& 1.00 & 0.99 & 0.86 \\
RotatE \citep{rotate}&  1767& .495 &.402 & .550 & .670
& 1.00 & 1.00 & 0.95\\
BoxE \cite{abboud2020boxe}&
\textbf{1022} & \textbf{.560} & \textbf{.484} & \textbf{.608}& .691& - & - & -\\
    \midrule
Rot-Pro & 1797 & \.542 & .443 & .596 & \textbf{.699}
& \textbf{1.00} & \textbf{1.00 }& \textbf{0.998}\\
    \bottomrule
  \end{tabular}}}
\end{table}

\subsection{Validation of learned representation of transitive relations}
We conduct further analysis on the the Rot-Pro model to verify that the model can actually learn the representations of transitive relations and have the theoretical property as expected.
\begin{figure}[tb]
  \centering
\subfigure[($-\pi,\pi$)-Init]{
    \begin{minipage}{0.3\textwidth}
    \label{fig:pr 11}
    \centering
    \includegraphics[height=3.5cm]{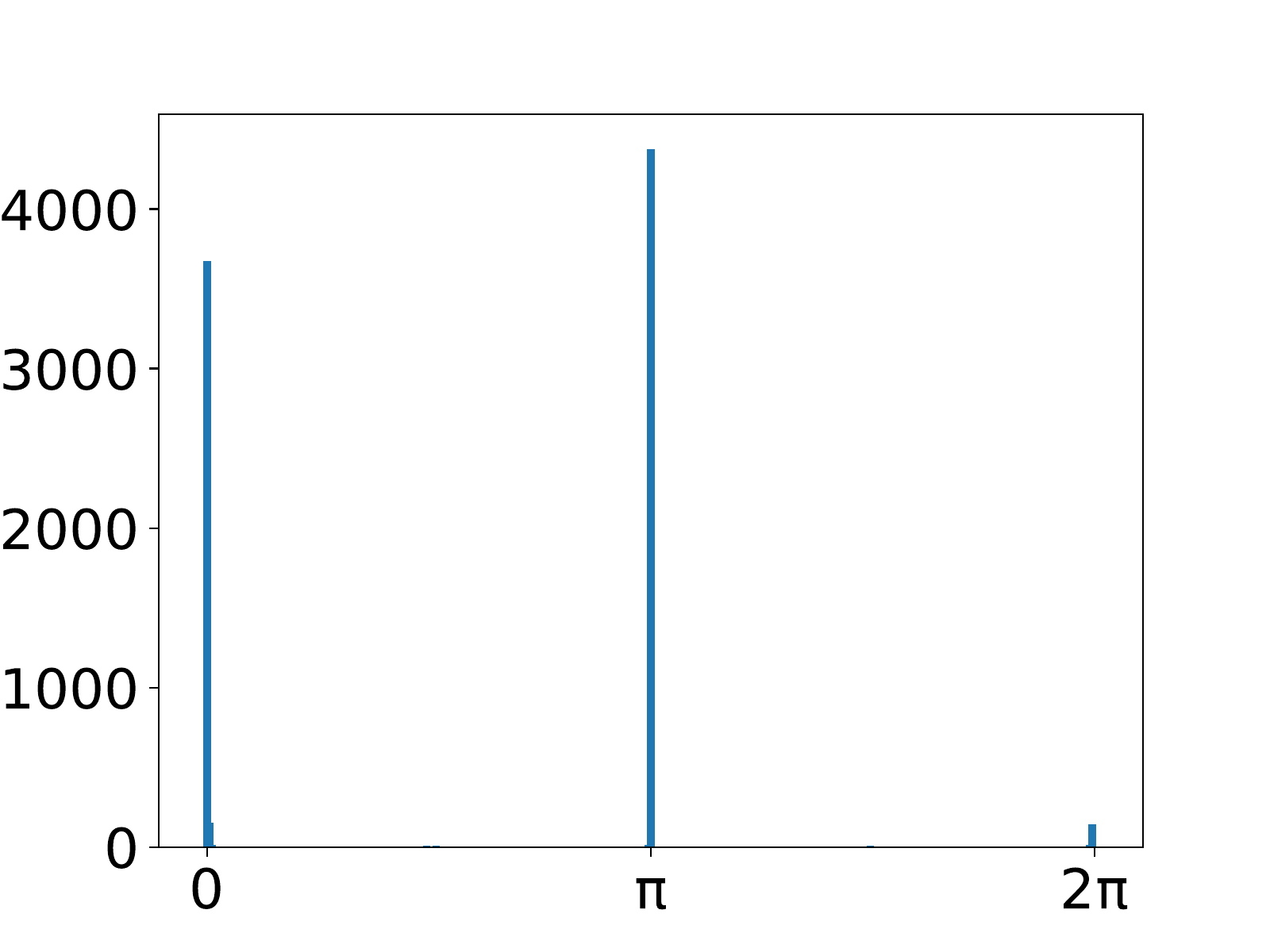}
    \end{minipage}
    }
\subfigure[($-\frac{\pi}{2}, \frac{\pi}{2}$)-Init]{
    \begin{minipage}{0.3\textwidth}
    \label{fig:pr 21}
    \centering
    \includegraphics[height=3.5cm]{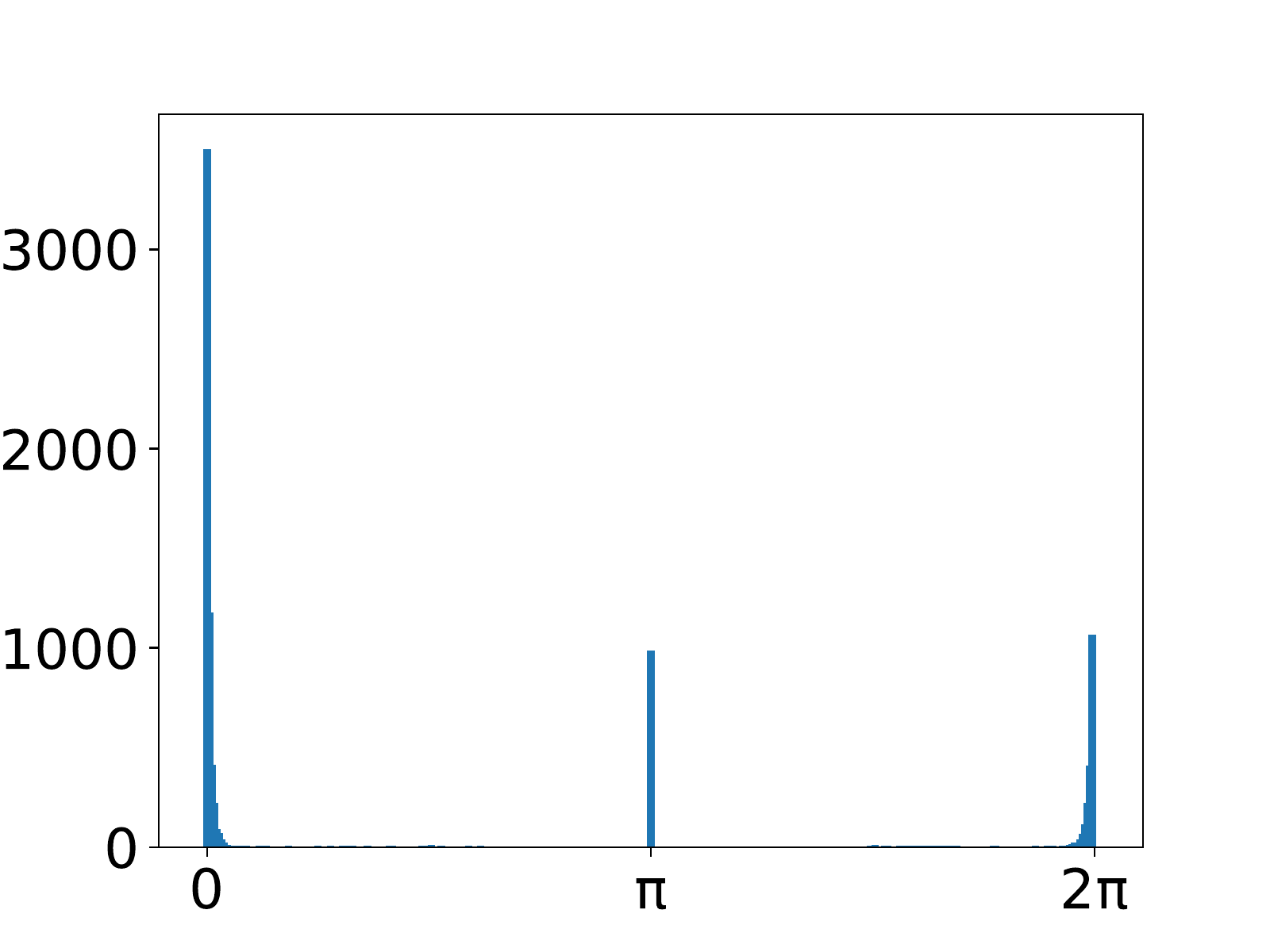}
    \end{minipage}
    }
\subfigure[($-\frac{\pi}{2}, \frac{\pi}{2}$)-Train]{
    \begin{minipage}{0.3\textwidth}
    \label{fig:pr 31}
    \centering
    \includegraphics[height=3.5cm]{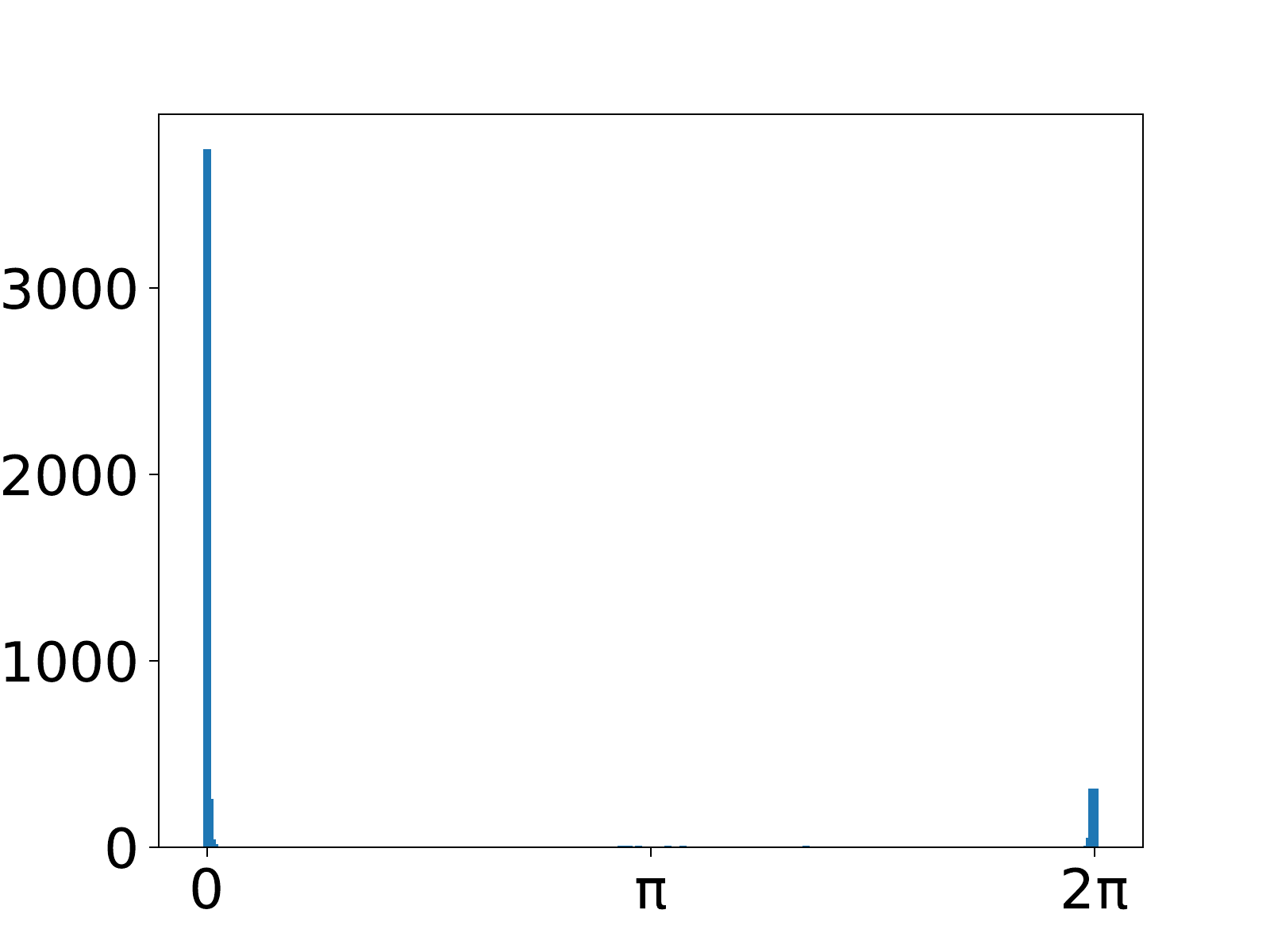}
    \end{minipage}
    }
\caption{Distributions of relational rotation phases. The $x$-axis is the relational rotation phases. The $y$-axis is the number of dimensions of relation embeddings that have non-trivial projection before rotation with a specific phase, i.e. the embedding of parameter $a$ and $b$ are $1, 0$ or $0, 1$.}
\label{fig:phase r}
\end{figure}
To do this, we first investigate the distributions of relational rotation phases in all dimensions of entity embeddings obtained by training on YAGO3-10. According to our theoretical analysis, we expect the model could learn to represent transitivity, i.e. for any non-trivial projection (i.e. $a = 1, b= 0$ or $a=0, b=1$), the corresponding phase of relational rotation should be $2n\pi$. The experimental results are shown in Figure~\ref{fig:pr 11}. It can be observed that the Rot-Pro model does learn the relational rotation phases $0$ and $2\pi$ as expected. However, it also learns the unexpected relational rotation phase $\pi$. Further experiments reveal that by turning the initialization range of the relational rotation phases, the problem of learning unexpected relational rotation phase $\pi$ could be mitigated. By changing the initialization range of relational rotation phases from  $(-\pi, \pi)$ to $(-\frac{\pi}{2}, \frac{\pi}{2})$, the number of relational rotation phases $\pi$ becomes significantly less. When we further restrict the relational rotation phases to $(-\frac{\pi}{2}, \frac{\pi}{2})$ during training, almost all relational rotation phases become $0$ or $2\pi$.

\begin{figure}
  \centering
\subfigure[Result on transitive test sets]{
    \begin{minipage}{0.48\textwidth}
    \label{trans-fig}
    \centering
    \includegraphics[height=3.5cm]{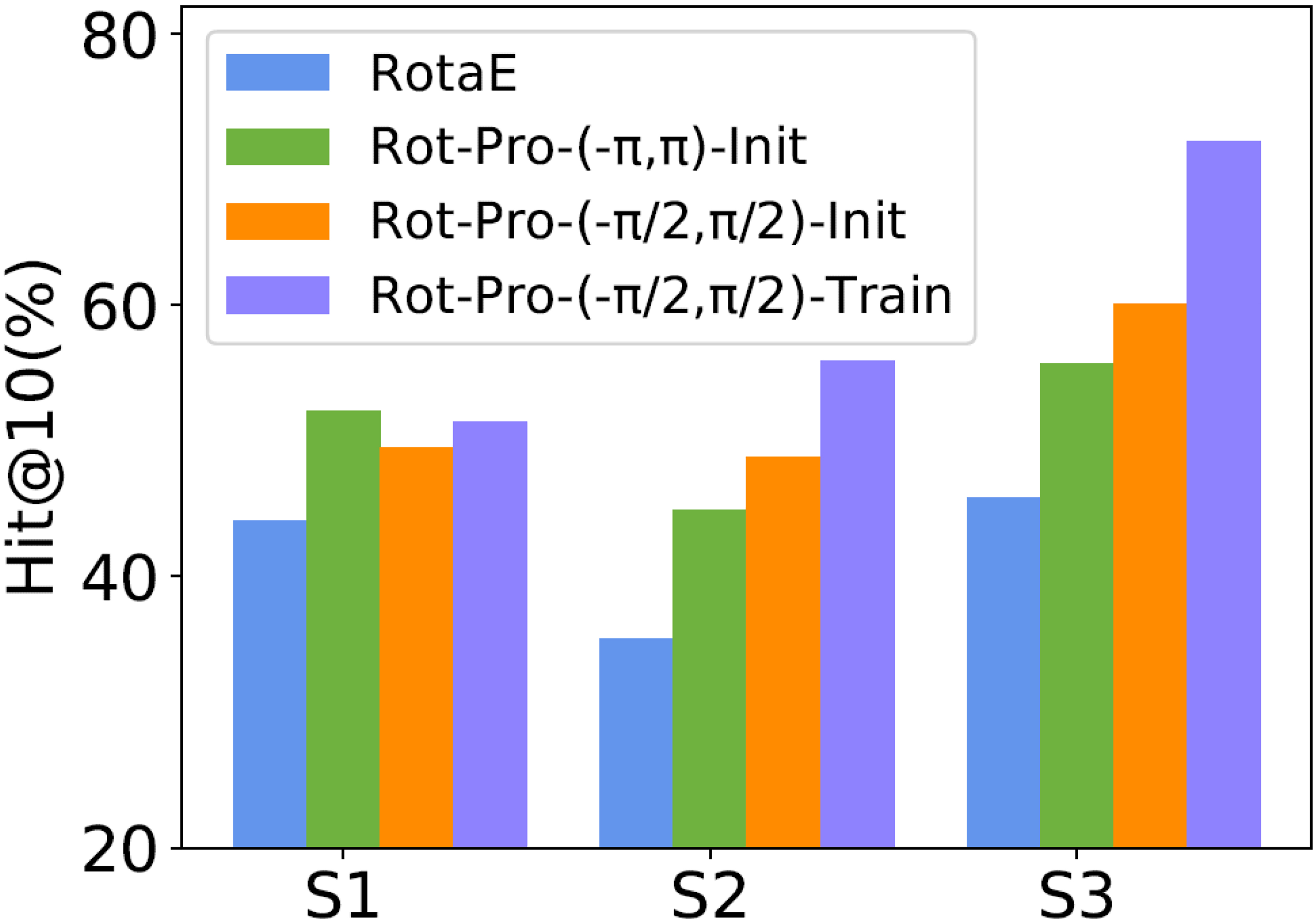}
    \end{minipage}
    }
\subfigure[Result on transitive test sets]{
    \begin{minipage}{0.48\textwidth}
    \label{trans-fig-hit1}
    \centering
    \includegraphics[height=3.5cm]{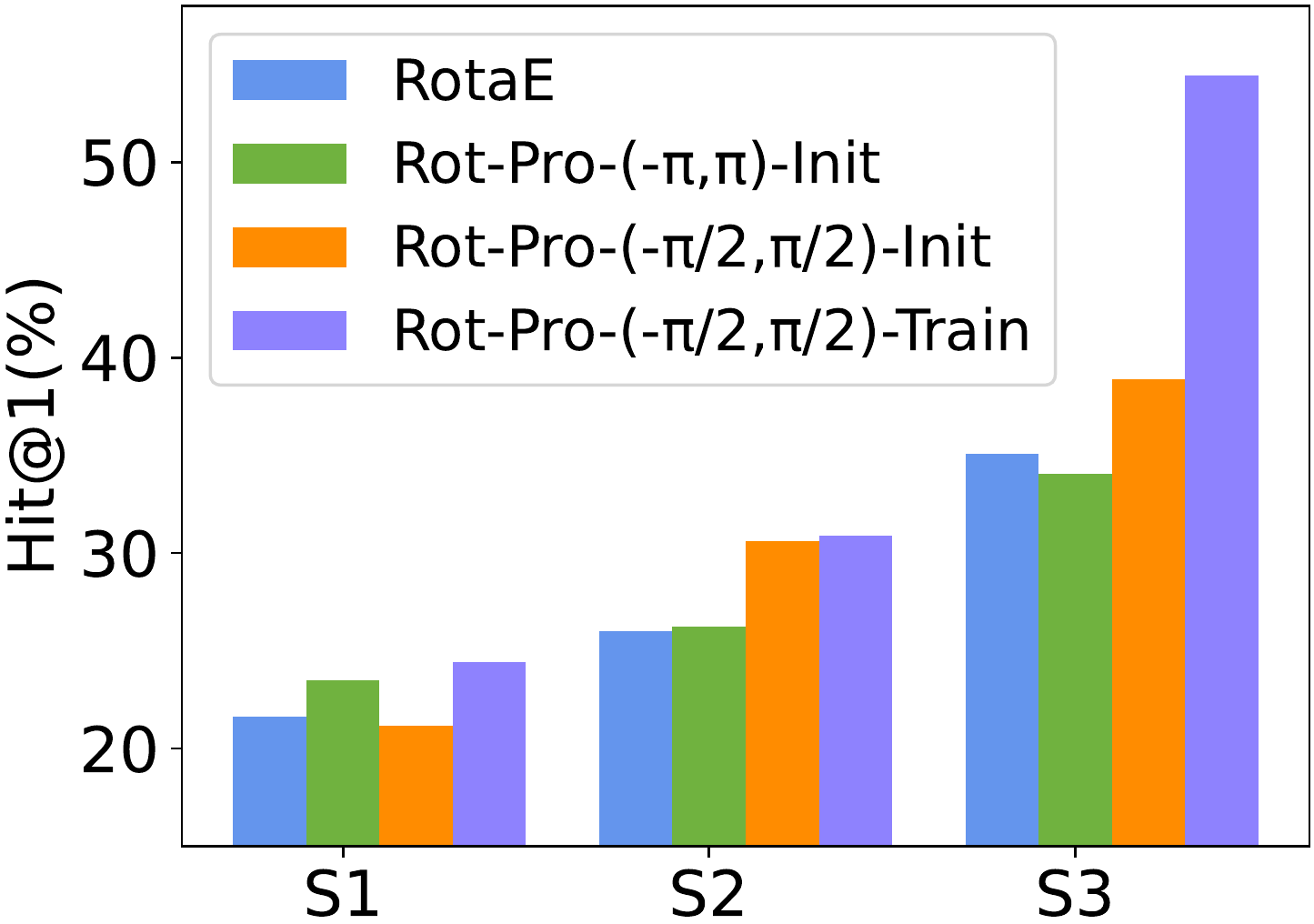}
    \end{minipage}
    }
\subfigure[Rot-Pro-$(-\pi,\pi)$-Init]{
    \begin{minipage}{0.48\textwidth}
    \label{fig:vis2}
    \centering
    \includegraphics[height=3.5cm]{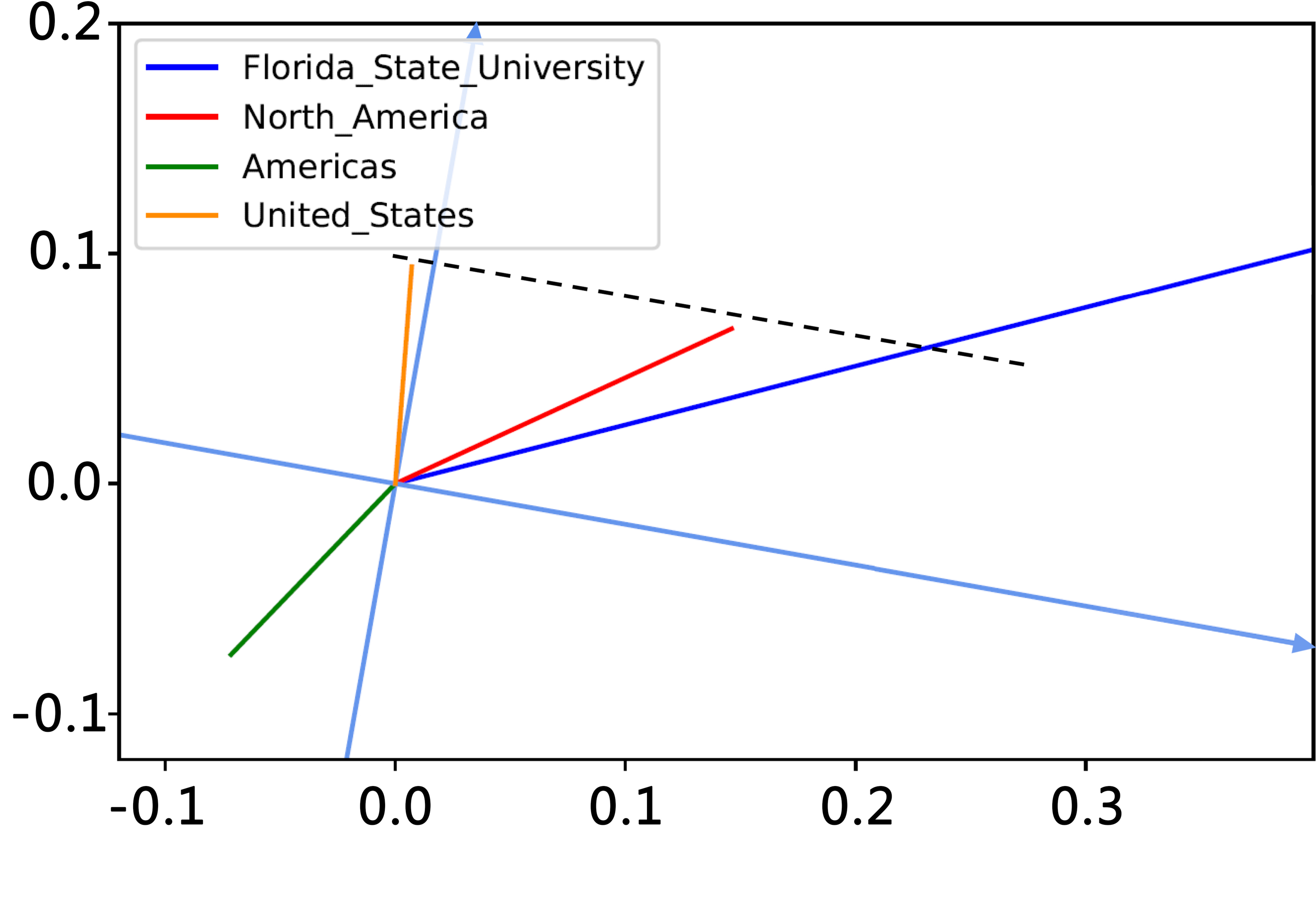}
    \end{minipage}
    }
\subfigure[Rot-Pro-$(-\frac{\pi}{2},\frac{\pi}{2})$-Train]{
    \begin{minipage}{0.48\textwidth}
    \label{fig:vis}
    \centering
    \includegraphics[height=3.5cm]{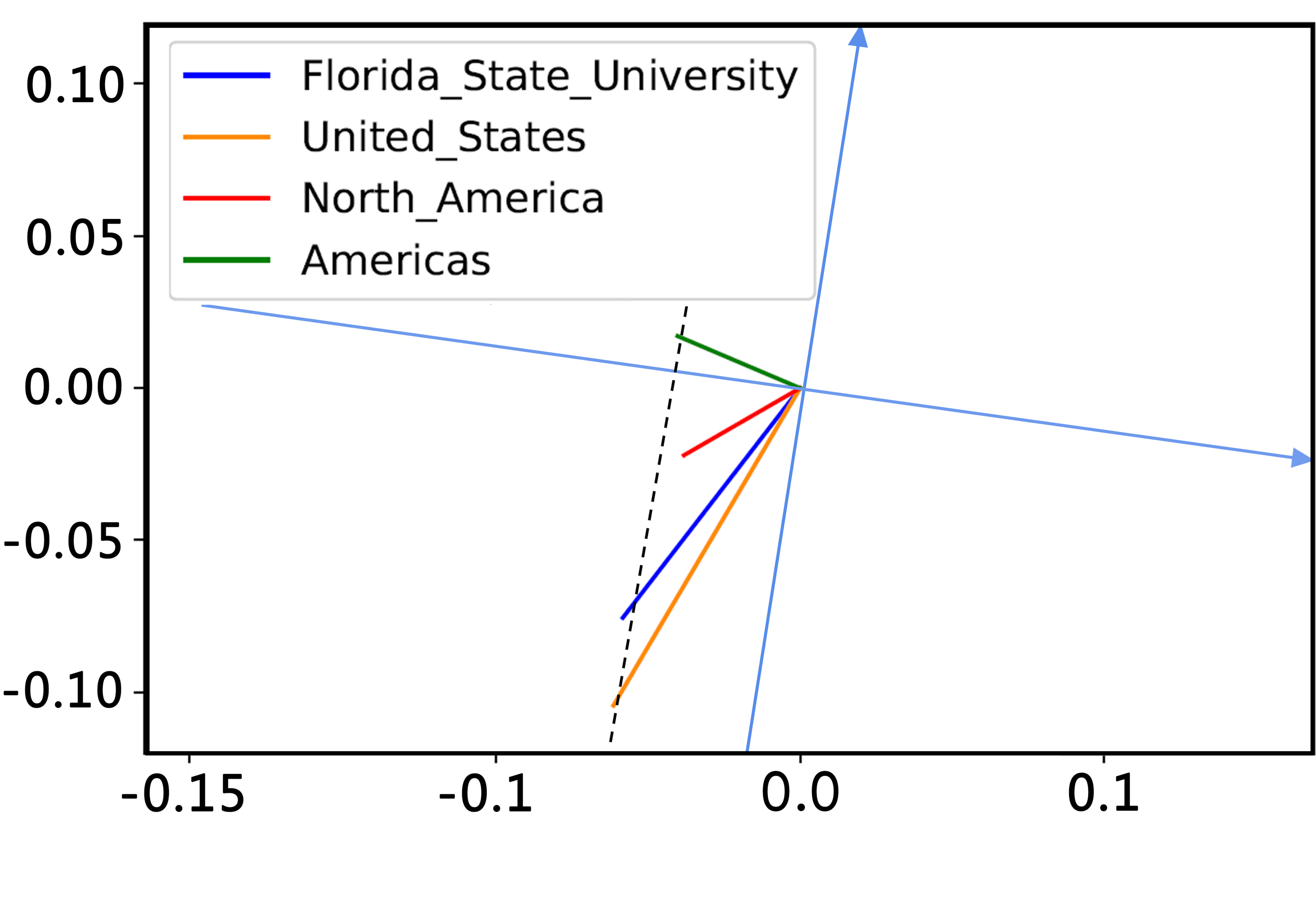}
    \end{minipage}
    }
\caption{(a) shows the Hit@10 results of the RotatE and Rot-Pro models on three test sets for transitivity. (b) and (c) show the representation of four entities in a transitive chain in two variants of Rot-Pro models with different constraints of relational rotation phase.}
\label{fig:trans-res}
\end{figure}
The results above are also reflected in the quantitative test.
To fully understand the impact of changing initialization range on the performance of the Rot-Pro model on modelling transitive relations, we construct three sub-test sets S1, S2, S3 of YAGO3-10 for evaluation, which consist of a single transitive relation \emph{isLocatedIn}. Test set $S1$ contains instances of \emph{isLocatedIn} in the original test set. Test set $S2$ is obtained by applying the transitivity once on instances of \emph{isLocatedIn} in the YAGO3-10 dataset.
Test set $S3$ is constructed similarly to $S2$, except by applying the transitivity at least twice. We take the RotatE model as baseline, and compare it with three variant of Rot-Pro models with different settings: the first one with relational rotation phase initialized in $(-\pi,\pi)$, the second one with relational rotation phase initialized in $(-\frac{\pi}{2}, \frac{\pi}{2})$, the third one with relational rotation phase restricted in $(-\frac{\pi}{2}, \frac{\pi}{2})$ during training. The experimental result is shown in Figure~\ref{trans-fig}.
It shows that tuning of initialization range also largely improves the performance of the Rot-Pro model, which coincides with the improvement of learning correct representations of transitive relations.
All the variant of Rot-Pro models outperform RotatE significantly, especially when the relational rotation phase is restricted to $(-\frac{\pi}{2}, \frac{\pi}{2})$ during training.

\begin{figure}
  \centering
\subfigure[Example illustration]{
    \begin{minipage}{0.4\textwidth}
    \label{fig:case descripltion}
    \centering
    \includegraphics[height=4cm]{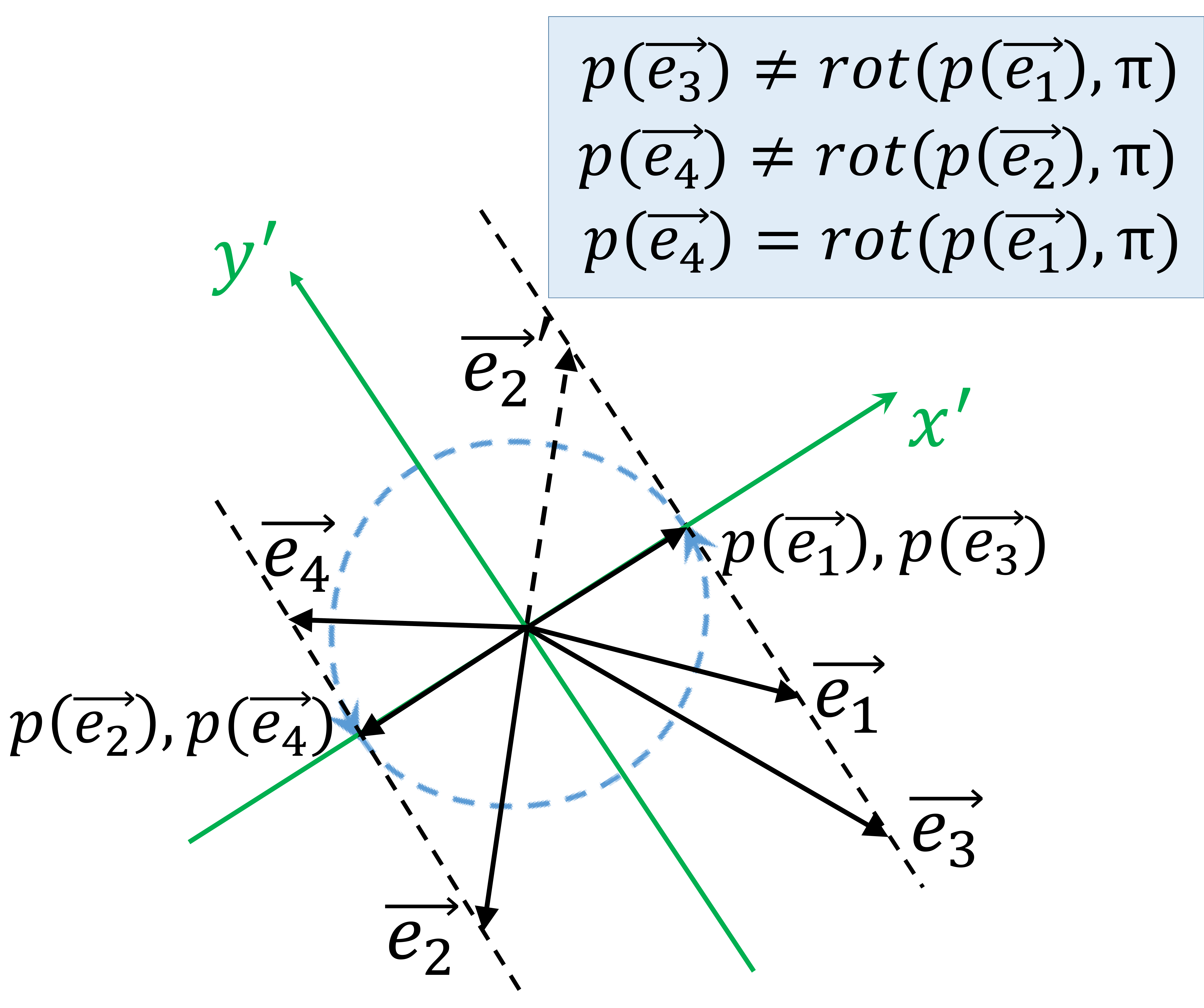}
    \end{minipage}
    }
\subfigure[Variation of loss]{
    \begin{minipage}{0.48\textwidth}
    \label{fig:loss}
    \centering
    \includegraphics[height=4cm]{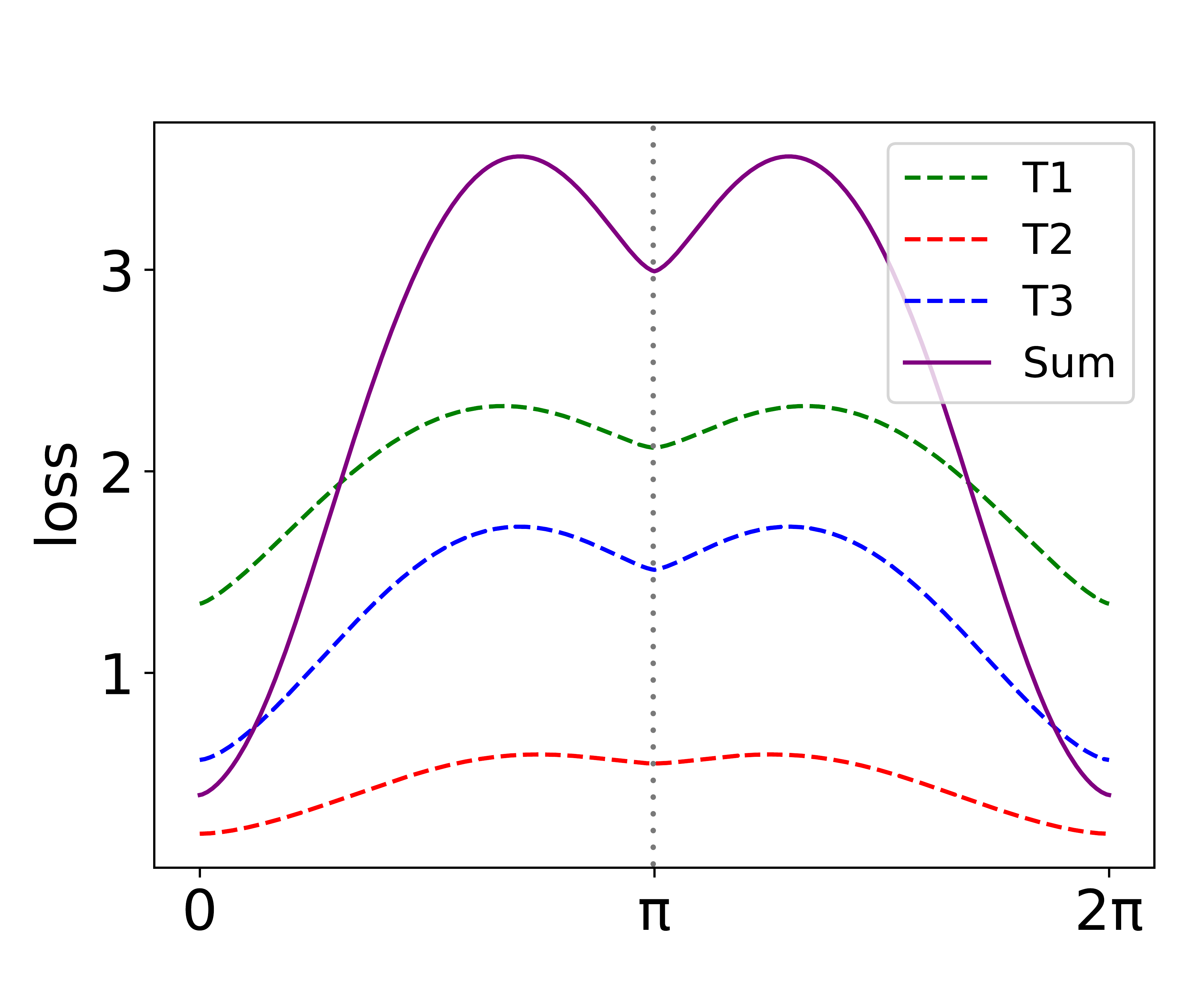}
    \end{minipage}
    }
\caption{(a) is an example of miss-placed four entities on a transitive chain, which consist of three triples: $T_1$: (Florida\_State\_University, isLocatedIn, United\_States), $T_2$: (United\_States, isLocatedIn, North\_America), $T_3$: (North\_America, isLocatedIn, Americas). (b) is the variation of loss for these triples.}
\label{fig:illustration and loss}
\end{figure}

We also visualize one dimension of embeddings of three entities connected by a transitive relation \emph{isLocatedIn} in YAGO3-10. Figure~\ref{fig:vis2} shows the visualization of entity embeddings of the Rot-Pro model trained with initialization range $(-\pi, \pi)$, which contains a miss placed entity embedding. While Figure~\ref{fig:vis} is the visualization of embeddings of entities of the Rot-Pro model trained by restricting the relational rotation phase to $(-\frac{\pi}{2}, \frac{\pi}{2})$ during training, where all entity embeddings in the transitive chain are placed correctly as expected. We can see that these vectors are basically fit in a line and can almost be projected to the same vector in the rotated axis.

\paragraph{Explanation.} The Rot-Pro model with no additional restrictions on the relational rotation phase may learn a phase $\pi$ which does not exactly meet our expectation. A possible representations of four entities in a transitive chain with relational rotation phase $\pi$ is illustrated in Figure~\ref{fig:case descripltion}, in which four out of the six instances of transitive relation are correctly represented, while the other two instances $(e_1, r, e_3)$ and $(e_2, r, e_4)$ are not. Obviously, this is not an optimal solution for the model, and the reason is likely to be that the model falls into a local optimum during the learning process.
To demonstrate this, we plot the variation of loss for three triples in a transitive chain with the relational rotation phase range over $(0, 2\pi)$. The result is shown in Figure~\ref{fig:loss}. We can find that there is indeed a local optimum at $\pi$, and the global optimum is at $0$ and $2\pi$, which is consistent with our conjecture.

\subsection{Limitation}
\begin{figure}
  \centering
\subfigure[($-\pi,\pi$)-Init]{
    \begin{minipage}{0.3\textwidth}
    \label{fig:pr 13}
    \centering
    \includegraphics[height=3.5cm]{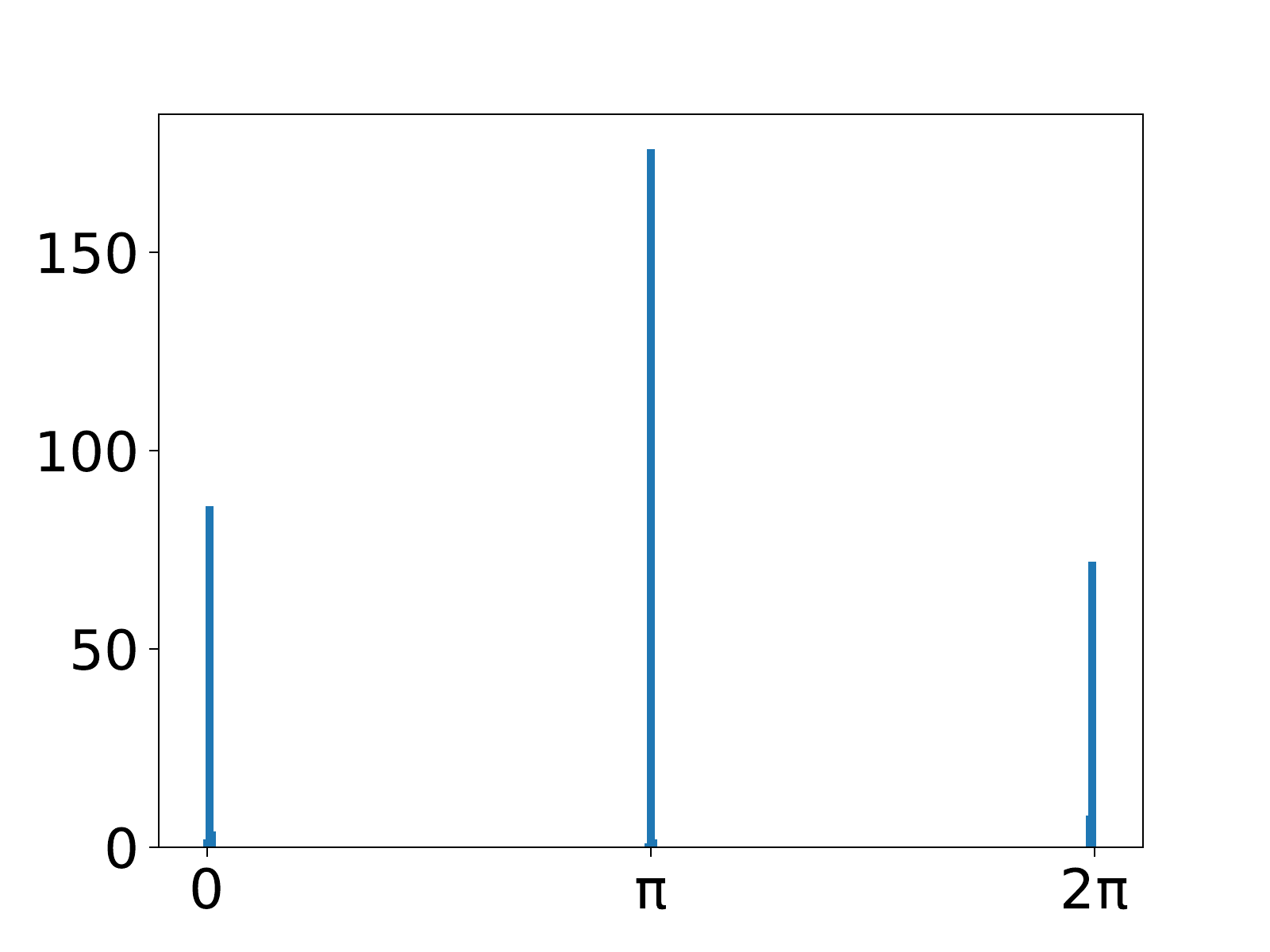}
    \end{minipage}
    }
\subfigure[($-\frac{\pi}{2}, \frac{\pi}{2}$)-Init]{
    \begin{minipage}{0.3\textwidth}
    \label{fig:pr 23}
    \centering
    \includegraphics[height=3.5cm]{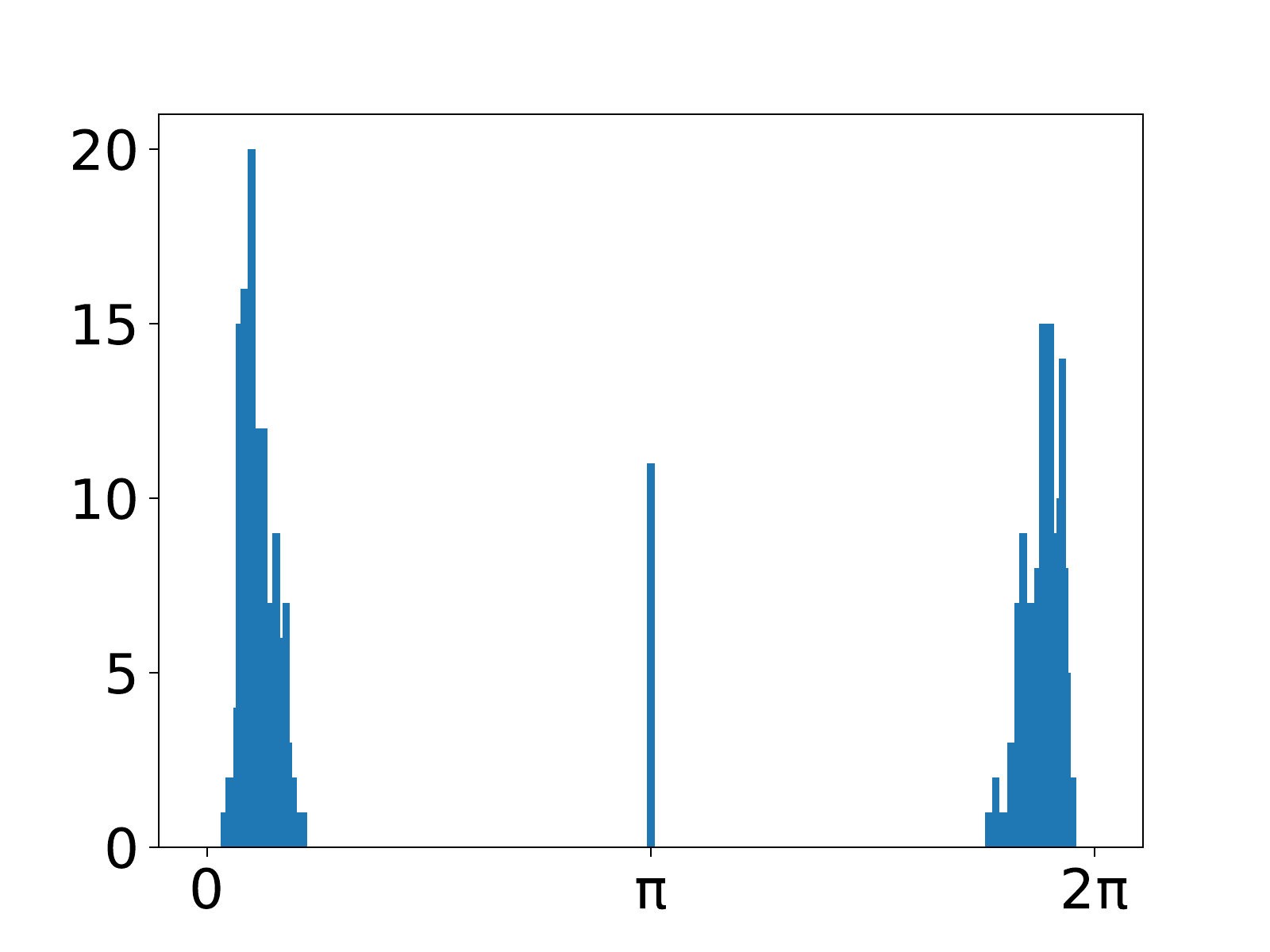}
    \end{minipage}
    }
\subfigure[($-\frac{\pi}{2}, \frac{\pi}{2}$)-Train]{
    \begin{minipage}{0.3\textwidth}
    \label{fig:pr 33}
    \centering
    \includegraphics[height=3.5cm]{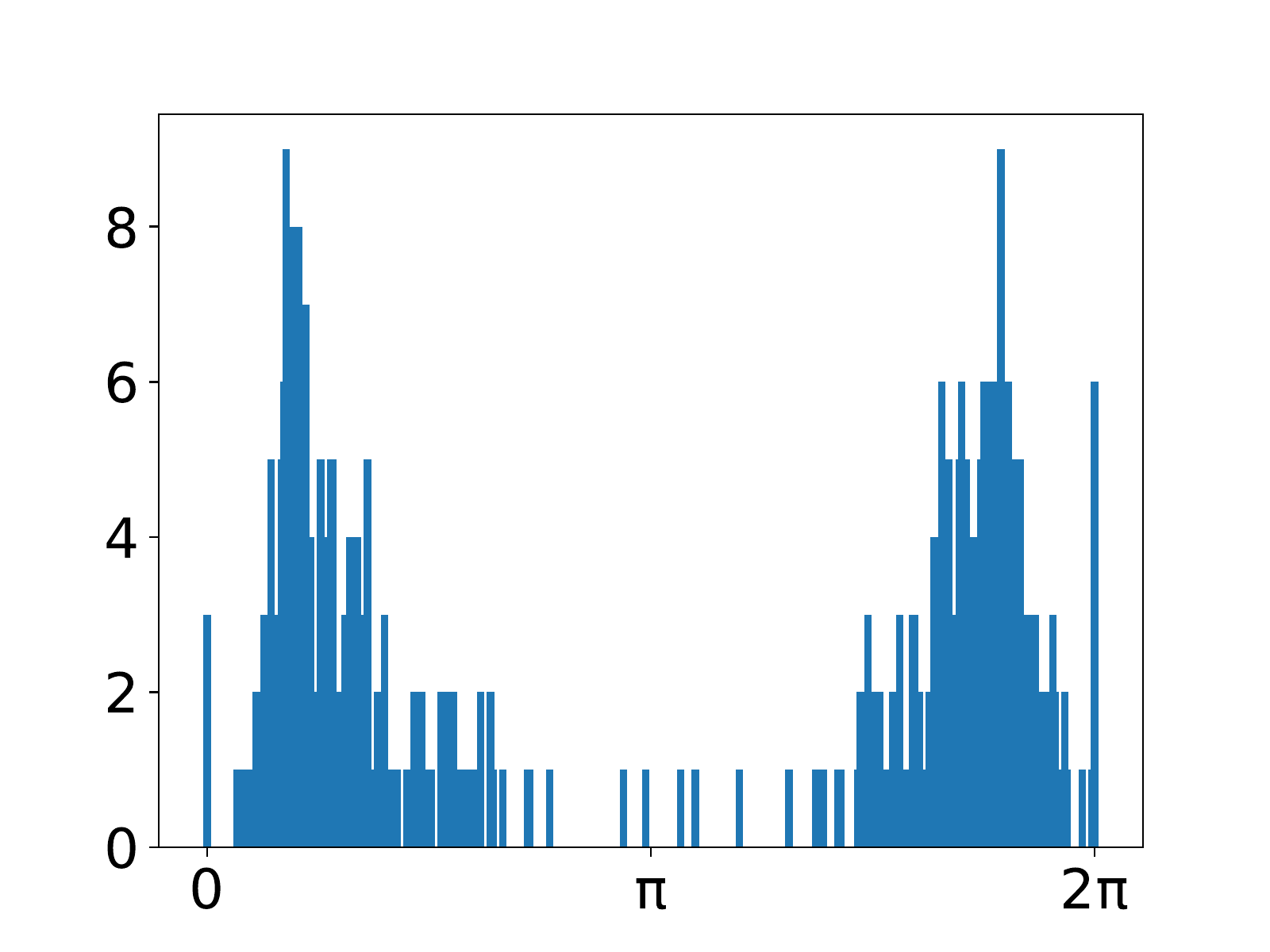}
    \end{minipage}
    }
\caption{The distribution of relational rotation phases of three Rot-Pro variants over all dimensions of a specific symmetric relation \emph{isMarriedTo}. The meaning of $x$ and $y$ axes is the same as Figure \ref{fig:phase r}.
}
\label{fig:phase r2}
\end{figure}

According to the experimental results, Rot-Pro is sensitive to the range of relational rotation phases, and hence prone to fall into the local optimum solution.
Though it can learn the idempotency of transitivity correctly by enforcing addition constraints on training, however, such constraints have negative impact on the learning of other relation patterns, such as symmetry. We can find in Figure~\ref{fig:pr 13} that for a symmetric relation, the relational rotation phases learned by a Rot-Pro without phase constraint are either $0$, $\pi$ or $2\pi$, which indicates that it has similar capability of modeling and inferring symmetry relation pattern as RotatE. By narrowing of the range of relational rotation phases, the histogram on the symmetry relation is gradually disrupted as shown in Figure~\ref{fig:pr 23} and \ref{fig:pr 33}.
Therefore, a trade-off should be made between the better modeling and inferring of transitivity and the  other relation patterns.
Such limitation might be further optimized through  learning each relation pattern separately and integrate through mechanisms such as attention, which we will study in future works.

\section{Conclusion and Future Work}
In this paper, we theoretically showed that the transitive relations can be modeled with projections that is an idempotent transformation. We also theoretically proved that the proposed Rot-Pro model is able to infer the symmetry, asymmetry, inversion, composition, and transitivity patterns. Our experimental results empirically showed that the Rot-Pro model can effectively learn the transitivity pattern. Our model also has the potential to be improved by extending the complex space to higher dimension space, such as quaternion space \cite{zhang2019quaternion, orthogonal}.
While the proof of expressiveness in many previous works is focused on the expressiveness of each relation pattern separately, it is worthwhile to further investigate whether a model can handle all common relation patterns simultaneously, considering that a single relation may exhibit multiple relation patterns and different relation patterns may have complex interactions in knowledge graphs.

\begin{ack}
This work was supported by the National Key Research and Development Plan of China (Grant No. 2018AAA0102301) and the National Natural Science Foundation of China (Grant
No. 61690202).
\end{ack}

\bibliographystyle{plain}
\bibliography{references}
\newpage
\appendix

\section{Proof of a property of transitive relation}

In section 3.2 of the submitted paper, we use the conclusion that ``the transitive relation can be represented as the union of transitive closures of of all transitive chains.'' Here, we prove it in the following lemma.

\begin{lemma}
\label{app:lemma}
A transitive relation can be represented as the union of transitive closures of all transitive chains.
\end{lemma}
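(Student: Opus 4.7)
The plan is to prove the lemma by establishing mutual inclusion between the set of instances of the transitive relation $r$ and the union $U := \bigcup_C \mathrm{cl}(C)$, where $C$ ranges over all transitive chains of $r$ and $\mathrm{cl}(C)$ denotes the transitive closure of $C$ as defined in Section~3.2 (the complete directed graph on the chain's vertex set, carrying $\frac{m(m-1)}{2}$ instances of $r$).

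For the inclusion $U \subseteq r$, I would fix an arbitrary chain $C = (e_1, r, e_2), \ldots, (e_{m-1}, r, e_m)$ and an arbitrary derived edge $(e_i, r, e_j) \in \mathrm{cl}(C)$ with $i < j$, and then induct on $j - i$. The base case $j = i+1$ is immediate, since $(e_i, r, e_{i+1})$ appears explicitly in the chain and is therefore already an instance of $r$. For the inductive step, the inductive hypothesis gives $(e_i, r, e_{j-1}) \in r$, and combining this with $(e_{j-1}, r, e_j) \in r$ via the transitivity of $r$ yields $(e_i, r, e_j) \in r$.

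For the reverse inclusion $r \subseteq U$, the key observation is elementary: any single instance $(a, r, b) \in r$ with $a \neq b$ is itself a transitive chain of length $m = 2$, and its closure is the singleton $\{(a, r, b)\}$, so $(a, r, b) \in U$. Combining the two inclusions gives $r = U$, which is the claim of the lemma.

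The only real subtlety I anticipate — and it is definitional rather than mathematical — is the case of reflexive facts $(a, r, a)$, which the paper's chain definition forbids since it requires $e_1, \ldots, e_m$ to be pairwise distinct. I would resolve this either by appealing to the standard convention that the knowledge graphs under consideration do not contain self-loops, or by broadening the notion of transitive chain to admit a degenerate length-one chain whose closure is the single reflexive fact. Either resolution closes the gap, and no further machinery is needed beyond the single induction above.
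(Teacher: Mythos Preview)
Your proof is correct, and in fact cleaner than the paper's. For the inclusion $r \subseteq U$ the paper takes a detour: it builds a reduced graph $G_r'$ by deleting every edge $(e,r,e')$ for which a longer $e$--$e'$ path exists, and then case-splits on whether a given instance survives in $G_r'$ (in which case it is itself a length-$2$ chain) or was deleted (in which case it lies in the closure of the witnessing longer path). You cut this short by observing that the first case already covers \emph{every} instance with $a \neq b$, so the $G_r'$ construction is unnecessary for the lemma as stated; the paper's extra machinery buys nothing here beyond perhaps an implicit gesture toward ``irreducible'' chains. You also make the inclusion $U \subseteq r$ explicit via a short induction, whereas the paper leaves it tacit. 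Your handling of the reflexive-edge corner case is appropriate; the paper's proof is silent on it (and its longer-path argument would also need distinctness of intermediate vertices to yield a chain in the defined sense, a point neither proof dwells on).
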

\begin{proof}
For any given transitive relation $r$, it can be represented as a directed graph $G_r$ which satisfies that for any vertex $e$ and $e'$, if there is a path from $e$ to $e'$, then there is an edge connecting $e$ to $e'$ directly.
We can see that if there is a path $e, e_1, \ldots, e_m, e'$ from $e$ to $e'$ whose length is larger than $1$ (i.e. $m \geqslant 1$), then the edge connecting $e$ and $e'$ directly can be derived through transitivity, i.e. the transitive chain $(e, r, e_1), \ldots, (e_m, r, e')$ implies that $(e, r, e')$.
By removing any edge $(e, r, e')$ that $e$ and $e'$ is connected by a path longer than $1$, we can obtain a new graph $G_r'$.

For any instance $(e, r, e')$, if it is an edge in $G_r'$, then $(e, r, e')$ is a transitive chain itself and hence it is in the transitive closure of itself; otherwise, $(e, r, e')$ is an edge removed from $G_r$ and hence there is a path from $e$ to $e'$ in $G_r$ whose length is larger than $1$, then $(e, r, e')$ can be derived through transitivity based on the path, i.e. $(e, r, e')$ is in the transitive closure of the path (transitive chain). Hence any instance of a transitive relation is in a transitive closure of a transitive chain. Thus, a transitive relation can be represented as the union of transitive closures of all transitive chains.
\end{proof}

\section{Statistics and split of datasets.}
The datasets we used for experiments are open-sourced, which can be obtained in the source code\footnote{\url{https://github.com/DeepGraphLearning/KnowledgeGraphEmbedding/tree/master/data}} of RotatE \cite{rotate}. Table \ref{stat_data_table} shows the statistic of these datasets, where the number of training triples in the S1, S2, and S3 datasets of Counties are separated by '/'.
\begin{table}[htb]
\caption{Statistics of datasets}
\label{stat_data_table}
  \centering
  \begin{tabular}{lccccc}
    \toprule
 &  & & \multicolumn{3}{c}{Triples}\\
  \cmidrule(lr{.75em}){4-6}
  & entities & relations & train & valid & test \\
    \midrule
    FB15k-237&
    14,541 & 237 & 272,115 & 17,535 & 20,466 \\
    WN18RR & 40,943 & 11 & 86,835 & 3,034 & 3,134
    \\
    YAGO3-10 &
    123,182	& 37 & 1,079,040 & 5,000 & 5,000\\
    Countries & 271 & 2 & 985/ 1,063/ 1,111 & 24 & 24\\
    \bottomrule
  \end{tabular}
\end{table}

\section{Computational resources}
Our model is implemented in Python 3.6 using Pytorch 1.1.0. Experiments are performed on a workstation with Intel Xeon Gold 5118 2.30GHz CPU and NVIDIA Tesla V100 16GB GPU.

\section{Hyper-parameter settings}
We list the best hyper-parameter setting of Rot-Pro on the above datasets in Table \ref{tab:append:hyper-param}. The setting of dimension $d$ and batch size $b$ is the same as RotatE \cite{rotate}.
\begin{table}[H]
    \centering
    \caption{Hyper-parameter settings}
    \label{tab:append:hyper-param}
    \begin{tabular}{lccccccc}
    \toprule
         &  dimension $d$ & batch size $b$ & fixed margin $\gamma$ & $\gamma_{m}$ & $\beta$ & $\alpha$\\
    \midrule
        FB15k-237 & 1000 & 1024 & 9.0 & 0.000001 & 1.5 & 0.001\\
        WN18RR & 500 & 512 & 4.0 & 0.000001 & 1.3 & 0.0003\\
        YAGO3-10 & 500 & 1024 & 16.0 & 0.000001 & 1.5 & 0.0005 \\
        Countries & 500 & 512 & 0.1 & 0.000001 & 1.5&0.0005\\
    \bottomrule
    \end{tabular}
\end{table}

\section{Transitivity performance comparison with BoxE }
 The fully expressive of BoxE refers to that it is able to express inference patterns, which includes symmetry, anti-symmetry, inversion, composition, hierarchy, intersection, and mutual exclusion. However, it does not model and infer the transitivity pattern. Therefore, we further conducted experiments on the three sub-test sets S1, S2, S3 we sampled from YAGO3-10 as described in the paper to verify this. The experimental results are listed as below.
\begin{table}[H]
\caption{Link prediction result of BoxE and Rot-Pro on S1, S2, S3 test sets.}
    \label{tab:my_label}
    \centering

    \begin{tabular}{lcccccc}
    \toprule
    &
    \multicolumn{2}{c}{S1} &
    \multicolumn{2}{c}{S2} &
    \multicolumn{2}{c}{S3}
    \\
      \cmidrule(lr{.75em}){2-3}
     \cmidrule(lr{.75em}){4-5}
      \cmidrule(lr{.75em}){6-7}
    &BoxE & Rot-Pro
    &BoxE & Rot-Pro
    &BoxE & Rot-Pro\\
    \midrule
        MR & .343 &.337& .290 & .328 & .381 & .447 \\
        Hit@1 & .255 & .247& .262 & .235& .349 & .337\\
        Hit@3 & .385 &.376 & .291 & .366 &.385 & .517 \\
        Hit@10 & .504  & .512 &  .342 & .522& .439 & .626\\
        \bottomrule
    \end{tabular}

\end{table}

\end{document}